\newcommand{\val}[1]{[\![{#1}]\!]}
\newcommand{\descr}[1]{(\![{#1}]\!)}
\newcommand{\nc}{\,\mid\!\sim}
\newcommand{\marginnote}[1]{\marginpar{\raggedright\tiny{#1}}}
\newcommand{\fakeparagraph}[1]{

\textit{#1} \ \ }
\begin{document}
\title{Defeasible Reasoning on Concepts\thanks{This paper is partially  funded by the EU MSCA (grant No.~101007627).}}
%
%
\author{Yiwen Ding\inst{1}\and
Krishna Manoorkar \inst{1}\orcidID{0000-0003-3664-7757} \and
Ni Wayan Switrayni\inst{1}\orcidID{0000-0001-9560-5739}\and Ruoding Wang\inst{1,2}\orcidID{0009-0005-3995-3225}
}
\authorrunning{Ding  et al.}
%
\institute{Vrije Universiteit Amsterdam, The Netherlands \and Xiamen University, China
\\
\email{\{dyiwen666\}@gmail.com}}
\maketitle              
\begin{abstract} 
In this paper, we take first steps toward developing defeasible reasoning on concepts in KLM framework. We define generalizations of cumulative reasoning system $\mathbf{C}$ and cumulative reasoning system with loop $\mathbf{CL}$ to conceptual setting. We also generalize cumulative models, cumulative ordered models, and preferential models to conceptual setting and show the soundness and completeness results for these models.
\keywords{Non-monotonic reasoning \and Cumulative model \and Polarity-based semantics \and Formal concept analysis}
\end{abstract}
\section{Introduction} \marginnote{KM: too long for now}
  Formal Concept Analysis (FCA) is a mathematical tool, as developed in \cite{ganter2012formal}, commonly used in Knowledge Representation and Reasoning to study conceptual hierarchies. FCA has applications across various fields, including information retrieval, association rule mining, data analysis, and ontology engineering. Lattice-based propositional logic, along with its polarity-based semantics, has been developed as a logic for reasoning about formal contexts and the concepts they define \cite{10.1007/978-3-662-52921-8_10,Conradie_2017}. This logic establishes a monotone consequence relation, denoted by $\vdash$, between concepts. Specifically, $C_1 \vdash C_2$ is interpreted as `all the objects in $C_1$ are in $C_2$', or equivalently, `all the features in the description of $C_2$ are  in the description of  $C_1$', which means that `$C_1$ is a subconcept of $C_2$'.

However, in many real-life applications, it is crucial to define a defeasible consequence relation, denoted by $\nc_A$ (or $\nc_X$), which formalizes the notion that `all the objects in $C_1$ are in $C_2$, with some exceptions' (or `all the features in the description of $C_2$ are in the description of $C_1$, with some exceptions'). In other words, this relation captures the idea that all the `typical' or `normal' objects (resp.~features) in $C_1$ (resp.~$C_2$) are in $C_2$ (resp.~$C_1$)\footnote{In this paper, we focus exclusively on the defeasible consequence relation $\nc_A$ (denoted by $\nc$), which pertains to typical objects, due to space constraints. Exploring $\nc_X$ and its interaction with $\nc_A$ would be an interesting direction for future research.}. It is important to note that such a relation, $\nc_A$ (or $\nc_X$), is usually non-monotonic. For example, let $C_1$ and $C_2$ represent the concepts of `mammals' and `viviparous animals', respectively. Since most mammals are typically viviparous, we have $C_1 \nc_A C_2$. However, if we introduce $C_3$, representing the concept of `echidnas', which are a kind of oviparous mammal, we find that $C_3 \vdash C_1$ (i.e., all echidnas are mammals), but $C_3 \not\nc_A C_2$ (i.e., typically, echidnas are not viviparous).

To formalize the relation $\nc$, we employ the framework developed by Kraus, Lehmann, and Magidor (commonly referred to as the KLM framework) \cite{kraus1990nonmonotonic}. We define the reasoning systems $\mathbf{CC}$ and $\mathbf{CCL}$ as the conceptual counterparts of the cumulative reasoning systems $\mathbf{C}$ and $\mathbf{CL}$, respectively, as defined in \cite{kraus1990nonmonotonic}. Since the language of lattice-based propositional logic is not closed under $\rightarrow$ and $\neg$, we cannot directly apply the framework from \cite{kraus1990nonmonotonic}. Nonetheless, we show that the KLM framework can be extended to reason about concepts with suitable modifications. We further generalize cumulative models and cumulative ordered models to conceptual cumulative models and conceptual cumulative ordered models, which are sound and complete with respect to $\mathbf{CC}$ and $\mathbf{CCL}$, respectively. Additionally, we define the conceptual counterparts of preferential models and show that, unlike in the setting of \cite{kraus1990nonmonotonic}, conceptual preferential models are complete with respect to $\mathbf{CC}$.

\fakeparagraph{Structure of the paper.}
In Section \ref{sec:prelim}, we provide the necessary preliminaries on the KLM framework for non-monotonic reasoning and lattice-based logic as the logic for concepts. In Section \ref{sec:KLM framework for reasoning on concepts}, we generalize the defeasible reasoning systems $\mathbf{CC}$ and $\mathbf{CCL}$ to the conceptual setting. We also define the conceptual counterparts of cumulative models, cumulative ordered models, and preferential models, and discuss the soundness and completeness proofs for them. In Section \ref{sec:Example}, we give an example to demonstrate non-monotonic reasoning on concepts. In Section \ref{sec:Conclusion and future works}, we conclude and give directions for future research.

\section{Preliminaries}\label{sec:prelim} 
In this section, we gather some useful preliminaries about the KLM framework for defeasible reasoning developed in 
\cite{kraus1990nonmonotonic}, and lattice-based propositional logic and its polarity-based semantics based on \cite{conradie2021nondistributivelogicssemanticsmeaning} and \cite{Conradie_2017}. For a detailed discussion, we refer to \cite[Section 2]{Blackburn_Rijke_Venema_2001}. 

\subsection{KLM Framework for Defeasible Reasoning}\label{ssec:classical cumulative reasoning}
The language $L$ of cumulative logic is defined over a set of propositional variables consisting propositional connectives $\neg, \vee, \wedge, \rightarrow,$ and $\leftrightarrow$. Negation and disjunction are considered as the primitive connectives and the rest as defined connectives.
Hence, $L$ can be considered as the set of all propositional formulas. 

A cumulative logical system \textbf{C} on $L$ consists of cumulative $L$-sequents $\phi \nc \psi$ (read as \textit{if $\phi$, normally $\psi$}, or \textit{$\psi$ is a plausible consequence of $\phi$}) containing the Reflexivity axiom $\phi \nc \phi$ and closed under the following inference rules:

{
\centering
\par 
\begin{tabular}{rlrl}
  \scriptsize{Left Logical Equivalence (LLE)}  & $\frac{\phi \leftrightarrow \psi \quad \phi \nc \chi}{\psi \nc \chi}$  &  $\frac{\phi\rightarrow \psi\quad \chi\nc\phi}{\chi\nc\psi}$
     & \scriptsize{Right Weakening (RW)} \\
    \scriptsize{Cautious Monotonicity (CM)} & $\frac{\phi \nc\psi \quad \phi\nc\chi}{\phi \wedge \psi \nc\chi}$ &$\frac{\phi\wedge\psi\nc\chi\quad\phi\nc\psi}{\phi\nc\chi}$ & \scriptsize{(Cut)}.
\end{tabular}

}

\noindent Such plausible consequence relation $\nc$ is called  {\em cumulative consequence relation}. 

\noindent \fakeparagraph {Cumulative models.} Now, we discuss the semantics for cumulative reasoning, i.e.,~for system $\textbf{C}$. Let $\mathcal{U}$ be a set 
of all worlds that the reasoner considers possible. The satisfaction relation
between worlds and formulas behaves as expected with regard to propositional connectives. Given $u\in \mathcal{U}$, and $\phi,\psi \in L$, we write $u\models \phi$ if $u$ satisfies $\phi$, $u\models \neg \phi$ iff $u\not\models \phi$, and $u\models \phi\vee\psi$ iff $u\models \phi$ or $u\models \psi$. 

Let $P\subseteq U$ for some set $U$ and $\prec$ a binary relation on $U$. We say that $t\in P$ is {\em minimal} in $P$ iff $\forall s\in P, s\not\prec t$. We say that $t\in P$ is a {\em minimum} of $P$ iff $\forall s\in P$ such that $s\not = t, t\prec s$. Furthermore, $P$ is {\em smooth} iff $\forall t \in P$, either there exists $s$ minimal in $P$ such that $s \prec t$, or $t$ itself is minimal in $P$.

Let $\mathcal{W}=(S,l,\prec)$ be such that $S$ is a set of elements called {\em states}, $l:S\rightarrow \mathcal{P}(\mathcal{U)}$ is a function that labels every state with a non-empty set of worlds, and $\prec$ is a binary relation on $S$.
The {\em satisfaction} relation $\models \subseteq S \times L$ on $\mathcal{W}$ is defined as follows: For any $\phi \in L$, and $s \in S$, $s \models \phi$ iff for all $u \in l(s)$, $u \models \phi$. $\mathcal{W}$ is said to be a {\em cumulative model} if the set $\widehat{\phi}=\{s\mid s\in S, s\models \phi\}$ is smooth for any $\phi \in L$. Any cumulative model $\mathcal{W}$ defines a {\em consequence relation $\nc_\mathcal{W}$} 
as follows: For any $\phi, \psi \in L$, 
 $\phi \nc_\mathcal{W}  \psi$ iff for any $s$ minimal in  $\widehat{\phi}$, we have $s \in \widehat{\psi}$.

Finally, the following theorem gives the soundness and completeness of  system $\textbf{C}$ w.r.t.~cumulative models \cite[Theorem 1]{kraus1990nonmonotonic}.
\begin{theorem}[Representation theorem for cumulative relations]
A consequence relation is a cumulative consequence relation iff it is defined by some cumulative model.
\end{theorem}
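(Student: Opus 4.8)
\emph{Proof strategy.} The plan is to prove the two directions separately; the ``only if'' direction is a direct verification, and the ``if'' direction is a canonical-model construction, essentially as in \cite{kraus1990nonmonotonic}. For soundness, fix a cumulative model $\mathcal{W}=(S,l,\prec)$ and check that $\nc_{\mathcal{W}}$ contains Reflexivity and is closed under the four rules. Reflexivity and Left Logical Equivalence are immediate --- every $s\in\widehat{\phi}$ satisfies $\phi$, and $\widehat{\phi}=\widehat{\psi}$ whenever $\phi\leftrightarrow\psi$ is valid. Right Weakening uses that $\phi\to\psi$ valid forces $\widehat{\phi}\subseteq\widehat{\psi}$, so minimal states of $\widehat{\chi}$ pass from $\widehat{\phi}$ to $\widehat{\psi}$. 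Cautious Monotonicity and Cut are the rules that genuinely use smoothness, and both follow from one lemma: \emph{if $\phi\nc_{\mathcal{W}}\psi$, then every state minimal in $\widehat{\phi}\cap\widehat{\psi}$ is already minimal in $\widehat{\phi}$} --- for otherwise smoothness of $\widehat{\phi}$ yields a $t$ minimal in $\widehat{\phi}$ with $t\prec s$, and $\phi\nc_{\mathcal{W}}\psi$ makes $t\models\psi$, so $t\in\widehat{\phi}\cap\widehat{\psi}$ sits below $s$. Then CM holds because $\widehat{\phi\wedge\psi}=\widehat{\phi}\cap\widehat{\psi}$ and a minimal state of it is minimal in $\widehat{\phi}$, hence satisfies $\chi$; and Cut holds because a minimal state of $\widehat{\phi}$ lies in $\widehat{\psi}$, is therefore also minimal in the smaller set $\widehat{\phi}\cap\widehat{\psi}$, and so satisfies $\chi$.

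\emph{The canonical model.} For the converse, let $\nc$ be a cumulative consequence relation, write $C_\phi=\{\psi\mid\phi\nc\psi\}$, and let $\overline{\phi}$ be the set of propositional valuations satisfying every formula in $C_\phi$. I would first record three small facts about $\mathbf{C}$: (i) the And rule --- from $\phi\nc\psi$ and $\phi\nc\chi$ conclude $\phi\nc\psi\wedge\chi$ --- is derivable from CM, Cut and RW, hence so is its version for finite conjunctions; (ii) if $\phi\nc\psi$ and $\psi\nc\phi$ then $C_\phi=C_\psi$, again via CM and Cut; (iii) if $\phi$ plausibly entails a contradiction $\bot$ (say $p\wedge\neg p$) then $\phi\nc\chi$ for every $\chi$, whence by (ii) every $\theta$ with $\theta\nc\phi$ also has $\theta\nc\bot$. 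By (i) and compactness of classical propositional logic, $\overline{\phi}\neq\emptyset$ exactly when $\phi\nc\bot$ fails. Now take $S$ to be the set of such formulas, set $l(\phi)=\overline{\phi}$ (nonempty by construction), and let $\theta\prec\phi$ hold iff $\phi\nc\theta$ holds while $\theta\nc\phi$ does not.

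\emph{Putting it together.} The key step is a truth lemma: for $\theta\in S$, $\theta\models\phi$ iff $\theta\nc\phi$; the backward direction is trivial, and the forward one unwinds $\theta\models\phi$ to ``$C_\theta$ classically entails $\phi$'' and then applies compactness and the finitary And rule. Hence $\widehat{\phi}=\{\theta\in S\mid\theta\nc\phi\}$ for every $\phi$, and in particular $\widehat{\phi}=\emptyset$ whenever $\phi\nc\bot$, by (iii). From the definition of $\prec$ one then checks: $\phi$ itself is minimal in $\widehat{\phi}$ (nothing $\prec\phi$ can satisfy $\theta\nc\phi$); the minimal states of $\widehat{\phi}$ are exactly the $\theta$ with $\theta\nc\phi$ and $\phi\nc\theta$, which by (ii) all have $C_\theta=C_\phi$; and every non-minimal $\theta\in\widehat{\phi}$ satisfies $\phi\prec\theta$ --- so every $\widehat{\phi}$ is smooth and $\mathcal{W}=(S,l,\prec)$ is a cumulative model. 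Finally $\phi\nc_{\mathcal{W}}\psi$ iff every minimal state of $\widehat{\phi}$ satisfies $\psi$ iff $\psi\in C_\phi$, i.e.\ iff $\phi\nc\psi$; so $\nc_{\mathcal{W}}$ is precisely $\nc$.

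\emph{Main obstacle.} The soundness direction is routine once the smoothness lemma is isolated. The substance is in completeness, and in particular in choosing the preference relation $\prec$ so that each $\widehat{\phi}$ is simultaneously smooth \emph{and} has its minimal states pinned to exactly $C_\phi$; this is where the failure of monotonicity --- only CM is available, so $\theta\nc\phi$ and $\phi\nc\chi$ together do not give $\theta\nc\chi$ --- must be managed, and it is also the reason states must be labelled by sets of worlds rather than single worlds. The auxiliary derivations (i)--(iii) are exactly what compensates for the missing monotonicity and makes the truth lemma and the smoothness check go through.
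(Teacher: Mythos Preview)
This theorem is stated in the paper's preliminaries as a known result from \cite{kraus1990nonmonotonic}, and the paper gives no proof of its own; your argument is correct and is essentially the original KLM construction, which the paper mirrors in Section~\ref{ssec:Characterization  cumulative consequence} for its conceptual analogue. The only noteworthy deviation is that you take states to be individual formulas $\phi$ with $\phi\not\nc\bot$, whereas KLM (and the paper, in Definitions~\ref{def:order cummulative} ff.) first quotient by the equivalence $\phi\sim\psi\Leftrightarrow(\phi\nc\psi\text{ and }\psi\nc\phi)$ and take states to be the classes $\phi/_\sim$; your version works just as well, but each $\widehat{\phi}$ then has possibly many minimal states (all $\theta\sim\phi$) rather than the single minimum $\phi/_\sim$ guaranteed by Lemma~\ref{minconcept}, so your canonical model is cumulative but not \emph{strong} in the paper's sense.
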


\fakeparagraph{Cumulative ordered models.}A cumulative model is said to be a {\em cumulative ordered model} if the relation $\prec$ is a strict partial order. 
It is proven (cf.~\cite[Theorem 2]{kraus1990nonmonotonic}) that a cumulative consequence relation validates the following rule (Loop) for all $n$ iff it is represented by some cumulative ordered model. 

{\small
\[\frac{\phi_0\nc\phi_1\quad \phi_1\nc\phi_2\quad\dots\quad\phi_{n-1}\nc\phi_n\quad\phi_n\nc\phi_0}{\phi_0\nc\phi_n}\quad\text{\footnotesize (Loop).}
\]
}
\noindent We call the extension of $\mathbf{C}$ with the rule (Loop) $\mathbf{CL}$. 

\fakeparagraph{Preferential models.}A cumulative ordered model is said to be a {\em preferential model} if the labelling function $l$ assigns every state a single possible world, and $\prec$ is a strict partial order. 
It is proven (cf.~\cite[Theorem 3]{kraus1990nonmonotonic}) that a cumulative consequence relation validates the following rule (Or) iff it is represented by some preferential model. 
{\small
\[\frac{\phi \nc \chi \quad \psi \nc \chi  }{\phi \vee \psi \nc \chi}\quad\text{\footnotesize (Or).}
\]
}

\subsection{Formal Concept Analysis and Lattice-based Propositional Logic}\label{ssec:Non-distributive modal logic}
In this section, we gather preliminaries on Formal Concept Analysis and lattice-based propositional logic as a reasoning system for it (see \cite{10.1007/978-3-662-52921-8_10,Conradie_2017}, for more details).  

A {\em formal context} or {\em polarity} is a tuple $\mathbb{P}=(A,X,I)$ such that $A$ and $X$ are sets interpreted as sets of {\em objects} and {\em features}, respectively, and the relation $I\subseteq A\times X$ is interpreted as $a I x$ if `object $a$ has feature $x$'. The maps $(\cdot)^\uparrow :\mathcal{P}(A)\rightarrow\mathcal{P}(X)$ and $(\cdot)^\downarrow :\mathcal{P}(X)\rightarrow\mathcal{P}(A)$, defined as $B^\uparrow := \{x \in X \mid \forall b \in B, b I x\}$ and $Y^\downarrow := \{a \in A \mid \forall y \in Y, a I y\}$, where $B\subseteq A$ and $Y\subseteq X$, form a {\em Galois connection} between posets $(\mathcal{P}(A),\subseteq)$ and $(\mathcal{P}(X),\subseteq)$, that is, $Y\subseteq B^\uparrow$ iff $B\subseteq Y^\downarrow$ for all $B\in \mathcal{P}(A)$ and $Y\in \mathcal{P}(X)$.
A {\em formal concept} or {\em category} of $\mathbb{P}$ is a pair $c=(\val{c},\descr{c})$ such that $\val{c}\subseteq A$, $\descr{c}\subseteq X$, and $\val{c}^\uparrow = \descr{c}$, $\descr{c}^\downarrow = \val{c}$. It follows that $\val{c}$ and $\descr{c}$ are {\em Galois-stable}, i.e.\ $\val{c}^{\uparrow\downarrow} = \val{c}$ and $\descr{c}^{\downarrow\uparrow} = \descr{c}$. The set of all formal concepts of $\mathbb{P}$ can be partially ordered as follows: For any formal concepts $c$ and $d$, $c\leq d$ iff $\val{c}\subseteq \val{d}$ iff $\descr{d}\subseteq \descr{c}$.
This poset $\mathbb{P}^+$ is a complete lattice where meet and join are given by $\bigwedge \mathcal{H}:=(\bigcap_{c\in \mathcal{H}} \val{c}, (\bigcap_{c\in \mathcal{H}} \val{c})^\uparrow)$ and $\bigvee \mathcal{H}:=((\bigcap_{c\in \mathcal{H}} \descr{c})^\downarrow,\bigcap_{c\in \mathcal{H}} \descr{c})$ for any $\mathcal{H}\subseteq \mathbb{P}^+$. It is then called the {\em concept lattice} of $\mathbb{P}$.

We define the {\em lattice-based propositional logic} $\mathbf{L}$ for reasoning about concepts as follows. Let $\mathsf{Prop}$ be a countable set of propositional variables. The language $\mathcal{L}$ (i.e. set of formulas) of $\mathbf{L}$ is defined by the following recursion:

{{
\centering
$\phi ::= p \mid  \bot \mid \top \mid \phi \wedge \phi \mid \phi \vee \phi$,
\par
}}
   
\noindent
where $p\in \mathsf{Prop}$. 
$\mathbf{L}$ is the same as the smallest logic containing the axioms:

{{\centering 
$p \vdash p,
\ \
p \vdash \top
\ \
\bot \vdash p,
\ \
p \vdash p \vee q,
\ \
q \vdash p \vee q,
\ \
p \wedge q \vdash p,
\ \
p \wedge q \vdash q,
\ $
\par}}

\noindent and closed under the following inference rules:

{{
\centering
$\frac{\phi\vdash \chi\quad \chi\vdash \psi}{\phi\vdash \psi}
\ \ 
\frac{\phi\vdash \psi}{\phi\left(\chi/p\right)\vdash\psi\left(\chi/p\right)}
\ \ 
\frac{\chi\vdash\phi\quad \chi\vdash\psi}{\chi\vdash \phi\wedge\psi}
\ \ 
\frac{\phi\vdash\chi\quad \psi\vdash\chi}{\phi\vee\psi\vdash\chi}$.
\par
\smallskip
}}

A {\em polarity-based model} is a pair $ \mathbb{M}= (\mathbb{P}, V)$, where $\mathbb{P}$ is a polarity, and $V: \mathsf{Prop} \rightarrow\mathbb{P}^+$ is a {\em valuation} that assigns a concept to each propositional variable. For each $p\in\mathsf{Prop}$, we let $\val{p}: = \val{V(p)}$ (resp.~$\descr{p}: = \descr{V(p)}$) denote the {\em extension} (resp.~{\em intension}) of the {\em interpretation} of $p$ under $V$. A valuation can be homomorphically extended to a unique map $\overline V: \mathcal{L} \rightarrow\mathbb{P}^+$ on all the $\mathcal{L}$-formulas. The connectives $\vee$ and $\wedge$ are given by join and meet of concepts as discussed above, which are interpreted as the least common super-concept and the greatest common sub-concept, respectively, while $\bot$ and $\top$ represent the smallest and the largest\footnote{We do not include negation in the language as there is no notion of negation of concepts accepted in FCA community in general.}.

Given a polarity-based model $\mathbb{M}$, the {\em satisfaction} relation $\Vdash$ and {\em co-satisfaction} relation $\succ$ are defined inductively as follows: For any $a \in A$, $x \in X$, and $\phi, \psi \in \mathcal{L}$,

{
\centering
\par
 $ \mathbb{M}, a \Vdash \phi$ iff $a \in \val{\overline{V}(\phi)}$, \quad  \quad  $ \mathbb{M}, x \succ \phi$ iff $x \in \descr{\overline{V}(\phi)}$, 
 
}
{
\centering
\par
 $ \mathbb{M} \models \phi \vdash \psi$ iff $\val{\phi} \subseteq \val{\psi}$  iff $\descr{\psi} \subseteq \descr {\phi}$.
 
}
 Note that, for any object $a \in A$  (resp.~feature $x \in X$), and formula $\phi \in \mathcal{L}$, $M, a \Vdash \phi$  (resp.~$M, x \succ \phi$)  is intuitively  interpreted  as `object $a$ is in concept $\phi$' (resp.~`feature $x$ describes concept $\phi$'). The $\mathcal{L}$-sequent $\phi \vdash \psi$ is intuitively interpreted as `every object of $\phi$ is in $\psi$' or `every feature in description of $\psi$ is in $\phi$'.
 Thus, polarity-based models provide a natural system for reasoning about concepts. The logic $\mathbf{L}$ is the set of $\mathcal{L}$-sequents valid on all polarity-based models.

Based on the general theory of lattice-based propositional logic, we give the following version of compactness for it, which would be useful later. 

\begin{proposition}\label{prop:compactness}
  Let $\Gamma \cup \{\phi_0\}$ be a set of  $\mathcal{L}$-formulas. Suppose for any finite $\Gamma' \subseteq \Gamma$, there exists a polarity-based model $\mathbb{M}'=(\mathbb{P}', V')$, where $\mathbb{P}'=(A',X',I')$ and $a' \in A'$ (resp.~$x' \in X'$) such that $\mathbb{M}',a'  \Vdash \gamma'$ (resp.~$\mathbb{M}',x'  \succ \gamma'$)  for any $\gamma' \in \Gamma'$, and $\mathbb{M}',a' \not \Vdash \phi_0$ (resp.~$\mathbb{M}',x' \not \succ \phi_0$). Then, there exists a model $\mathbb{M}=(\mathbb{P}, V)$, where $\mathbb{P}=(A,X,I)$ and $a \in A$ (resp.~$x \in X$) such that $\mathbb{M},a  \Vdash \gamma$ (resp.~$\mathbb{M},x \succ \gamma$) for any $\gamma \in \Gamma$, and $\mathbb{M},a \not \Vdash \phi_0$ (resp.~$\mathbb{M},x \not \succ \phi_0$).
\end{proposition}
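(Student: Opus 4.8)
The plan is to derive the statement from the compactness theorem of (two-sorted) classical first-order logic, after a faithful translation of the polarity-based semantics into that logic. I argue for the version phrased in terms of objects ($\Vdash$); the version in terms of features ($\succ$) is proved symmetrically, interchanging the two sorts, $(\cdot)^\uparrow$ with $(\cdot)^\downarrow$, $\wedge$ with $\vee$, and $\top$ with $\bot$.

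First I would fix the (countable) set $\mathsf{Sub}$ of all subformulas of the formulas in $\Gamma\cup\{\phi_0\}$ and set up a two-sorted first-order signature: a sort $\mathsf{obj}$ of objects and a sort $\mathsf{fea}$ of features, a binary relation symbol $I$ of type $\mathsf{obj}\times\mathsf{fea}$, a fresh constant $c_0$ of sort $\mathsf{obj}$, and, for each $\psi\in\mathsf{Sub}$, a unary predicate $P_\psi$ on $\mathsf{obj}$ and a unary predicate $Q_\psi$ on $\mathsf{fea}$ (intended to name $\val{\overline{V}(\psi)}$ and $\descr{\overline{V}(\psi)}$). I would then let $T$ be the first-order theory consisting of: for every $\psi\in\mathsf{Sub}$, the two Galois biconditionals $\forall x\,(Q_\psi(x)\leftrightarrow\forall a\,(P_\psi(a)\to aIx))$ and $\forall a\,(P_\psi(a)\leftrightarrow\forall x\,(Q_\psi(x)\to aIx))$, which jointly express that $(P_\psi,Q_\psi)$ is a formal concept; the recursive clauses $\forall a\,(P_{\phi\wedge\chi}(a)\leftrightarrow(P_\phi(a)\wedge P_\chi(a)))$ and $\forall x\,(Q_{\phi\vee\chi}(x)\leftrightarrow(Q_\phi(x)\wedge Q_\chi(x)))$ for all relevant subformulas; and the sentences $\forall a\,P_\top(a)$ and $\forall x\,Q_\bot(x)$. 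By the description of the concept-lattice operations recalled in Section~\ref{ssec:Non-distributive modal logic}, the ``complementary halves'' $P_{\phi\vee\chi}$, $Q_{\phi\wedge\chi}$, $Q_\top$ and $P_\bot$ are then already pinned down correctly by the Galois biconditionals. Finally set $T^{+}:=T\cup\{P_\gamma(c_0)\mid\gamma\in\Gamma\}\cup\{\neg P_{\phi_0}(c_0)\}$.

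The heart of the argument is that $T^{+}$ is finitely satisfiable, which is exactly where the hypothesis enters. A finite $\Delta\subseteq T^{+}$ mentions $P_\gamma(c_0)$ for only finitely many $\gamma$, say for $\gamma$ in a finite set $\Gamma'\subseteq\Gamma$; by hypothesis there are $\mathbb M'=(\mathbb P',V')$ with $\mathbb P'=(A',X',I')$ and $a'\in A'$ such that $\mathbb M',a'\Vdash\gamma$ for all $\gamma\in\Gamma'$ and $\mathbb M',a'\not\Vdash\phi_0$; interpreting $\mathsf{obj},\mathsf{fea},I$ as $A',X',I'$, the constant $c_0$ as $a'$, and $P_\psi,Q_\psi$ as $\val{\overline{V'}(\psi)},\descr{\overline{V'}(\psi)}$ for every $\psi\in\mathsf{Sub}$ makes $\Delta$ true, since the $T$-axioms merely transcribe the definitions of a formal concept and of $\wedge,\vee,\top,\bot$ on concepts, while $P_\gamma(c_0)$ and $\neg P_{\phi_0}(c_0)$ hold by the choice of $a'$. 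By first-order compactness $T^{+}$ has a model $\mathfrak A$. From it I read off the polarity $\mathbb P=(A,X,I)$ (the two sorts and $I$ as interpreted in $\mathfrak A$) and the valuation $V$ with $V(p)=(P_p^{\mathfrak A},Q_p^{\mathfrak A})$ for $p$ occurring in $\mathsf{Sub}$ (a formal concept by the Galois axioms) and $V(p)$ the top concept of $\mathbb P$ otherwise. A routine simultaneous induction on $\psi\in\mathsf{Sub}$, using precisely the recursive and Galois axioms of $T$, then gives $\val{\overline{V}(\psi)}=P_\psi^{\mathfrak A}$ and $\descr{\overline{V}(\psi)}=Q_\psi^{\mathfrak A}$; hence $\mathbb M=(\mathbb P,V)$ and the object $a:=c_0^{\mathfrak A}$ satisfy $\mathbb M,a\Vdash\gamma$ for every $\gamma\in\Gamma$ and $\mathbb M,a\not\Vdash\phi_0$, as required.

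The only subtle point I expect is the faithfulness of the translation: one must check that imposing both Galois biconditionals for every subformula, together with the ``easy'' recursive halves, really forces each $(P_\psi^{\mathfrak A},Q_\psi^{\mathfrak A})$ to be a genuine formal concept and makes the inductive identities hold --- and that no degenerate case arises (for instance, $\neg P_{\phi_0}(c_0)$ together with the Galois axiom for $\phi_0$ already forces the feature sort to be non-empty). The rest is the standard compactness bookkeeping. An equivalent, purely model-theoretic route is to form an ultraproduct of the witnessing models $\mathbb M'$ along an ultrafilter on the set of finite subsets of $\Gamma$ refining the family of tails $\{\Gamma''\mid\Gamma'\subseteq\Gamma''\}$; since formal concepts are cut out by the first-order conditions above, the fundamental theorem of ultraproducts makes the ultraproduct a polarity-based model in which the class of the chosen points is the desired witness.
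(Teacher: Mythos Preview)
The paper does not supply a proof of this proposition: it is stated immediately after the phrase ``Based on the general theory of lattice-based propositional logic, we give the following version of compactness for it,'' with no further argument and no specific reference pinpointing where such a result is established. Your translation into two-sorted first-order logic followed by an appeal to first-order compactness (or the equivalent ultraproduct construction you sketch at the end) is a correct and self-contained way to prove it. The key observations --- that the pair of Galois biconditionals forces $(P_\psi^{\mathfrak A},Q_\psi^{\mathfrak A})$ to be a genuine formal concept of the polarity read off from $\mathfrak A$, and that the ``easy halves'' $P_{\phi\wedge\chi}\leftrightarrow P_\phi\wedge P_\chi$ and $Q_{\phi\vee\chi}\leftrightarrow Q_\phi\wedge Q_\chi$ combined with those biconditionals determine the remaining halves exactly as in the concept-lattice description of $\wedge$ and $\vee$ --- are precisely what is needed, and the simultaneous induction on $\psi\in\mathsf{Sub}$ goes through as you describe. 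Your side remark that $\neg P_{\phi_0}(c_0)$ together with the Galois axiom for $\phi_0$ forces the feature sort to be inhabited is correct (and the constant $c_0$ does the same for the object sort), so no degeneracy arises. In short, your proposal fills in what the paper omits rather than diverging from anything the paper actually does.
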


The following proposition follows from the fact that both $\vee$ and $\wedge$ are defined in terms of intersections on the concept lattices.  

\begin{proposition}\label{prop:joint dissatiscfaction}
Let $\Gamma$ and $\Delta$ be sets of $\mathcal{L}$-formulas such that for any $\psi \in \Delta$, there exists a polarity-based model $\mathbb{M}_\psi=(\mathbb{P}', V')$, where $\mathbb{P}'=(A',X',I')$ and $a' \in A'$ (resp.~$x' \in X'$), such that $\mathbb{M}_\psi,a' \not \Vdash \psi$ (resp.~$\mathbb{M}_\psi,x' \not \succ \psi$) and $\mathbb{M}_\psi,a'  \Vdash \phi$ (resp.~$\mathbb{M}_\psi,x'  \succ \phi$)  for any $\phi \in \Gamma$. Then, there exists a model  $\mathbb{M}=(\mathbb{P}, V)$, where $\mathbb{P}=(A,X,I)$ and $a \in A$ (resp.~$x \in X$), such that $\mathbb{M},a \not \Vdash \psi$ (resp.~$\mathbb{M},x \not \succ \psi$) and $\mathbb{M},a  \Vdash \phi$ (resp.~$\mathbb{M},x  \succ \phi$)  for any $\phi \in \Gamma$, $\psi \in \Delta$.
\end{proposition}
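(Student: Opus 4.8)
The plan is to realize a single witness as a ``diagonal'' point inside a \emph{product} of the given models, exploiting precisely the fact that in the concept lattice both connectives are intersections and therefore commute with a componentwise decomposition of the Galois maps. Write $\mathbb{P}_\psi=(A_\psi,X_\psi,I_\psi)$, $V_\psi$, and $a'_\psi\in A_\psi$ for the data supplied by the hypothesis for each $\psi\in\Delta$ (I treat the object case; the feature case is order-dual). First I would form the product polarity $\mathbb{P}_0=(\bigsqcup_\psi A_\psi,\ \bigsqcup_\psi X_\psi,\ I_0)$ in which $I_0$ agrees with $I_\psi$ inside each component and is \emph{full} across distinct components (every object of one component bears every feature of another). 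The central computation is that for this choice the Galois maps decompose: for $E\subseteq\bigsqcup_\psi A_\psi$ one checks $E^{\uparrow}=\bigsqcup_\psi (E\cap A_\psi)^{\uparrow_\psi}$, dually for $\downarrow$, and hence $E^{\uparrow\downarrow}=\bigsqcup_\psi (E\cap A_\psi)^{\uparrow_\psi\downarrow_\psi}$; thus a set is Galois-stable iff each of its component-restrictions is, so $\mathbb{P}_0^+\cong\prod_\psi\mathbb{P}_\psi^+$.

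Next I would adjoin one fresh ``diagonal'' object $a$, relating it to a feature $x\in X_\psi$ exactly when $a'_\psi\,I_\psi\,x$; call the result $\mathbb{P}$. A short recomputation shows that $a$ behaves in the decomposition like the tuple $(a'_\psi)_\psi$: in the formula for $E^{\uparrow}$ the presence of $a$ in $E$ merely adjoins $a'_\psi$ to the component set $E\cap A_\psi$, so the per-component decomposition survives. I then define the valuation by $\descr{V(p)}:=\bigsqcup_\psi\descr{V_\psi(p)}$, with $\val{V(p)}$ its $\downarrow$-closure, verify via the decomposition that this set is Galois-stable, and prove by induction on $\phi$ the restriction identities $\val{\overline V(\phi)}\cap A_\psi=\val{\overline{V_\psi}(\phi)}$ and $\descr{\overline V(\phi)}\cap X_\psi=\descr{\overline{V_\psi}(\phi)}$ for every $\psi$. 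The $\wedge$- and $\vee$-steps are exactly where the hint is used: since $\val{\phi\wedge\chi}=\val\phi\cap\val\chi$ and $\descr{\phi\vee\chi}=\descr\phi\cap\descr\chi$, intersection commutes with $\bigsqcup_\psi(-\cap A_\psi)$, so the identities pass through, while the accompanying closures (the intension of a meet, the extension of a join) are resolved by the decomposition of $\uparrow,\downarrow$.

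From the intension identity it follows immediately that $a\in\val{\overline V(\phi)}$ iff $a'_\psi\in\val{\overline{V_\psi}(\phi)}$ for every $\psi$, i.e.\ $\mathbb{M},a\Vdash\phi$ iff $\mathbb{M}_\psi,a'_\psi\Vdash\phi$ for all $\psi$. Hence $a$ satisfies every $\phi\in\Gamma$ (each coordinate does, by hypothesis) and fails every $\psi_0\in\Delta$ (its $\psi_0$-coordinate $a'_{\psi_0}$ fails $\psi_0$), which is exactly the single witness demanded by the conclusion. Note that this works for $\Delta$ and $\Gamma$ of arbitrary cardinality, since the disjoint union is taken over all of $\Delta$ at once and the induction ranges over finite formulas only.

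I expect the main obstacle to be the bookkeeping around the diagonal object: one must verify that adjoining $a$ neither destroys the Galois-stability of the chosen valuation nor disturbs the componentwise decomposition of the closure operators, i.e.\ precisely the inductive steps whose intension or extension is computed through a closure (meet-intension and join-extension). Once the decomposition lemma is shown to be compatible with $a$ (the collapse $\{a'_\psi\}\cup(E\cap A_\psi)=E\cap A_\psi$ whenever $a\in E$), the remaining cases are routine, and the feature version is obtained by dualizing objects and features, $\Vdash$ and $\succ$, and $\wedge$ and $\vee$.
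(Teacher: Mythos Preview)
Your construction is correct and is precisely a concrete realisation of the one-line justification the paper offers: the paper does not give a proof of this proposition at all, it merely remarks that it ``follows from the fact that both $\vee$ and $\wedge$ are defined in terms of intersections on the concept lattices.'' Your product-polarity with full cross-component incidence, together with the adjoined diagonal object, is exactly the kind of argument that makes this remark precise, and the inductive verification you outline goes through as you describe.

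One small point of phrasing: the parenthetical ``collapse $\{a'_\psi\}\cup(E\cap A_\psi)=E\cap A_\psi$ whenever $a\in E$'' is not literally true for arbitrary $E$, and is not what you need. What actually makes the $\wedge$-intension step work is your simultaneous inductive invariant $a\in\val{\overline V(\phi)}\Leftrightarrow(\forall\psi)\,a'_\psi\in\val{\overline{V_\psi}(\phi)}$: whenever $a$ lies in the extension being closed, the invariant guarantees that $a'_\psi$ is already in the $\psi$-component of that extension, so adjoining it changes nothing. If you state and carry this invariant explicitly alongside the two restriction identities, the bookkeeping you flag as the main obstacle disappears. With that correction, your argument is complete, and strictly more detailed than anything in the paper.
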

Note that the propositional counterpart of this proposition is not true. Indeed, if we take $\Gamma =\emptyset$, and $\Delta=\{p,\neg p\}$, then we can have two different models (i.e.,~valuations) which do not validate $p$ and $\neg p$, respectively, but no model that invalidates both of them simultaneously (as $p \vee \neg p$ is a tautology).

\section{KLM Framework for Reasoning on Concepts}\label{sec:KLM framework for reasoning on concepts}
 In this section, we generalize the reasoning system $\mathbf{C}$ and $\mathbf{CL}$ discussed in Section \ref{ssec:classical cumulative reasoning} to conceptual cumulative reasoning  $\mathbf{CC}$ and conceptual cumulative reasoning with loop $\mathbf{CCL}$. We also generalize the cumulative models, cumulative ordered models and preferential models to conceptual settings and   show the soundness and completeness for them. 

To generalize the cumulative reasoning to conceptual setting, we have to make the following modifications to the logic and models
described in \cite{kraus1990nonmonotonic}: 
(1) In \cite{kraus1990nonmonotonic}, the language of underlying logic is assumed to be closed under all the classical connectives including negation and implication. However, lattice-based propositional logic does not have negation and implication in its language. Thus, we replace the formula $\phi \rightarrow \psi$ in the rules and axioms of $\mathbf{C}$ with the sequent $ \phi \vdash \psi$. We choose $\phi \vdash \psi$ as the replacement because it has similar interpretation to $\phi \rightarrow \psi$ in the sense that, for any polarity-based model $\mathbb{M}=(\mathbb{P},V)$ with $\mathbb{P} =(A,X,I)$, $\mathbb{M}\models \phi \vdash \psi $ iff for any $a \in A$, $\mathbb{M},a \Vdash \phi$ implies $\mathbb{M},a \Vdash \psi$.
(2) As the underlying logic in \cite{kraus1990nonmonotonic} is assumed to have implication and deduction theorem, the {\em compactness} of this logic is enough to prove \cite[Lemma 8]{kraus1990nonmonotonic} which is central in completeness proof. In our setting, as we do not have implication in the language, we need the modified version of compactness (cf.~Proposition \ref{prop:compactness}) to obtain the counterpart of that (cf.~Lemma \ref{lem: normal cummulative}). 
(3) In polarity-based models, satisfaction and co-satisfaction relations are defined locally at each object or feature. Thus, possible worlds in our setting are pointed polarity-based models (cf.~Definition \ref{def:pointed polarity-based model}). This is similar to approach used in \cite{BRITZ201155}
to define KLM-style modal logics. 

Given any $\phi,\psi\in\mathcal{L}$, $\phi\nc\psi$ is a {\em cumulative $\mathcal{L}$-sequent}. Similar to the propositional setting, we interpret $C_1 \nc C_2$ as `typically or commonly, objects in $C_1$ are in $C_2$'. 
A {\em lattice-based cumulative logic} is a set of $\mathcal{L}$-sequents closed under all the axioms and rules of lattice-based propositional logic, and cumulative $\mathcal{L}$-sequents closed under the Reflexivity axiom $\phi\nc\phi$, and the following rules:

{{
\centering
\par
\begin{tabular}{rlrl}
\scriptsize{Left Logical Equivalence (LLE)} &
$\frac{\phi\vdash\psi\quad \psi\vdash \phi\quad \phi\nc\chi}{\psi\nc\chi}$  &  $\frac{\phi\vdash\psi\quad\chi\nc\phi}{\chi\nc\psi}$ & \scriptsize{Right Weakening (RW)}\\ 
\scriptsize{Cautious Monotonicity (CM)}&$\frac{\phi\nc\psi\quad\phi\nc\chi}{\phi\wedge\psi\nc\chi}$ &
$\frac{\phi\wedge\psi\nc\chi\quad\phi\nc\psi}{\phi\nc\chi}$ &\scriptsize{(Cut)}.\\
\end{tabular}

}}
\smallskip

\noindent This relation $\nc$ is called \textit{conceptual cumulative consequence relation}.



%

\subsection{Conceptual Cumulative Models}\label{ssuc:conceptual cumulative models}

Now, we introduce conceptual cumulative models to capture cumulative reasoning about concepts from a semantic perspective. Informally, our models consist of states which are sets of pointed polarity-based models (cf.~Definition \ref{def:pointed polarity-based model}), with a binary relation between those states. This relation represents the preferences that the reasoner may have between different states. 
The reasoner, described by a conceptual cumulative model, accepts a conditional assertion $\phi\nc\psi$ if and only if 
 all the pointed polarity-based models in any most preferred states for $\phi$, are also pointed polarity-based models 
 for $\psi$.

\begin{definition}\label{def:pointed polarity-based model}
A {\em pointed polarity-based model} is a tuple $\mathbb{M}_a=(\mathbb{P}, V, a)$, where $\mathbb{P}=(A,X,I)$ is a polarity, $V:\mathrm{Prop} \to \mathbb{P}^+$ is a valuation on $\mathbb{P}$, and $a \in A$. We call $a$ the {\em pointed object} of $\mathbb{M}_a$. 
\end{definition}

\begin{definition}
Let $\mathcal{U}$ be  a set of pointed polarity-based models, and  $\mathcal{M}=(S,l,\prec)$ be a tuple, where  $S$ is a non-empty set of {\em states}, $l:S \to \mathcal{P}(\mathcal{U})$ is a map which assigns each state to a set of pointed polarity-based models, and $\prec$ is a binary relation on $S$. For any $\phi \in \mathcal{L}$, and $s \in S$, $s \models \phi$ iff for all $\mathbb{M}_a\in l(s)$, $\mathbb{M}_a \Vdash \phi$. $\mathcal{M}$ is said to be a {\em conceptual cumulative model} if the set $\widehat{\phi}=\{s\mid s\in S, s\models \phi\}$ is smooth for any $\phi \in \mathcal{L}$. A conceptual cumulative model $\mathcal{M}=(S,l,\prec)$ is called a {\em strong conceptual cumulative model} if the relation $\prec$ is asymmetric and the set $\widehat{\phi}$ has a minimum for every $\phi \in \mathcal{L}$. 
\end{definition}

The relation $\prec$ represents the reasoner’s preference among states. Given two states $s$ and $t$, $s\prec t$ means that, in the reasoner’s mind, $s$ is preferred to or more natural than $t$. For example, when considering the category of birds, one may prefer a state consisting of pointed models with a pigeon as the pointed object over a state consisting of pointed models with a penguin as the pointed object.



 We now define the consequence relation on the conceptual cumulative models.

 \begin{definition}
    Given a conceptual cumulative model $\mathcal{M}=(S,l, \prec)$, the {\em consequence relation} defined by $\mathcal{M}$, denoted as $\nc_\mathcal{M}$, is defined by: $\phi_1 \nc_\mathcal{M}  \phi_2$ iff for any $s$ minimal in  $\widehat{\phi_1}$, we have $s \in \widehat{\phi_2}$.
 \end{definition}

 \subsection{Characterization of Conceptual Cumulative Consequence Relations}\label{ssec:Characterization  cumulative consequence}
 In this section, we shall characterize the relationship between conceptual cumulative models and conceptual cumulative consequence relations. The proof broadly follows the strategy for characterizing the relationship between cumulative models and cumulative consequence relations given in \cite[Section 3.5]{kraus1990nonmonotonic}.
 The following lemma is crucial in proving that $\mathbf{CC}$ is sound w.r.t.~conceptual cumulative models. 
 \begin{lemma}\label{lem:intersection}
For any formulas $\phi, \psi \in \mathcal{L}$,  $\widehat{\phi\wedge\psi} = \widehat{\phi} \cap \widehat{\psi}$.
 \end{lemma}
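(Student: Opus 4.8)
\textbf{Proof proposal for Lemma~\ref{lem:intersection}.}

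The plan is to unfold the definition of $\widehat{(\cdot)}$ and reduce the claim to a pointwise statement about states, which in turn reduces to the semantic clause for $\wedge$ in polarity-based models. Recall that for a state $s \in S$ we have $s \in \widehat{\chi}$ iff $s \models \chi$ iff for every pointed polarity-based model $\mathbb{M}_a \in l(s)$ we have $\mathbb{M}_a \Vdash \chi$. So it suffices to show, for each $s \in S$, that $s \models \phi \wedge \psi$ iff $s \models \phi$ and $s \models \psi$. Both inclusions then follow immediately: $s \in \widehat{\phi\wedge\psi}$ iff $s \models \phi$ and $s \models \psi$ iff $s \in \widehat{\phi}$ and $s \in \widehat{\psi}$ iff $s \in \widehat{\phi} \cap \widehat{\psi}$.

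First I would fix $s \in S$ and an arbitrary $\mathbb{M}_a = (\mathbb{P}, V, a) \in l(s)$ with $\mathbb{P} = (A,X,I)$. The key fact is that $\mathbb{M}_a \Vdash \phi \wedge \psi$ iff $\mathbb{M}_a \Vdash \phi$ and $\mathbb{M}_a \Vdash \psi$. This holds because $\mathbb{M}_a \Vdash \chi$ means $a \in \val{\overline{V}(\chi)}$, and $\overline{V}(\phi\wedge\psi) = \overline{V}(\phi) \wedge \overline{V}(\psi)$, whose extension is $\val{\overline{V}(\phi)} \cap \val{\overline{V}(\psi)}$ by the description of meets in the concept lattice $\mathbb{P}^+$ (namely $\bigwedge\mathcal{H} = (\bigcap_{c\in\mathcal{H}}\val{c}, (\bigcap_{c\in\mathcal{H}}\val{c})^\uparrow)$); hence $a \in \val{\overline{V}(\phi\wedge\psi)}$ iff $a \in \val{\overline{V}(\phi)}$ and $a \in \val{\overline{V}(\psi)}$.

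Given this pointwise equivalence, the passage to states is a routine quantifier manipulation: $s \models \phi\wedge\psi$ iff $\mathbb{M}_a \Vdash \phi\wedge\psi$ for all $\mathbb{M}_a \in l(s)$ iff ($\mathbb{M}_a \Vdash \phi$ and $\mathbb{M}_a \Vdash \psi$) for all $\mathbb{M}_a \in l(s)$ iff ($\mathbb{M}_a \Vdash \phi$ for all $\mathbb{M}_a \in l(s)$) and ($\mathbb{M}_a \Vdash \psi$ for all $\mathbb{M}_a \in l(s)$) iff $s \models \phi$ and $s \models \psi$. There is no real obstacle here; the only point requiring a little care — and the step I would flag as the crux of the argument — is invoking the explicit formula for meets in $\mathbb{P}^+$ to justify that the extension of $\overline{V}(\phi)\wedge\overline{V}(\psi)$ is exactly the intersection of the two extensions, rather than something a priori smaller. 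This is precisely the observation quoted before Proposition~\ref{prop:joint dissatiscfaction} that $\wedge$ is "defined in terms of intersections on the concept lattices", and it is what makes the lemma go through cleanly.
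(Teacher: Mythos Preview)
Your proof is correct and follows essentially the same approach as the paper: unfold $\widehat{(\cdot)}$, reduce to the pointwise equivalence $\mathbb{M}_a \Vdash \phi\wedge\psi \iff \mathbb{M}_a \Vdash \phi \text{ and } \mathbb{M}_a \Vdash \psi$, and then pass to states by quantifier distribution. You are in fact more explicit than the paper, which leaves the justification via the meet formula in $\mathbb{P}^+$ implicit; your extra care there is appropriate and not a deviation in strategy.
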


\begin{proof}
The proof is given by the following equations.

{{
\centering
\par 
\begin{tabular}{rclr}
  $\widehat{\phi\wedge\psi}$   &=& $\{s \in S \mid s \models \phi\wedge\psi\}  $ &  By def.~of $\widehat{\cdot}$ \\
     & = & $\{s \in S \mid (\mathbb{P}, V,a) \in l(s) \Rightarrow (\mathbb{P}, V,a)\Vdash \phi\wedge\psi\} $ & By def.~of $\models$ \\
      & = & $\{s \in S \mid (\mathbb{P}, V,a) \in l(s) \Rightarrow (\mathbb{P}, V,a)\Vdash \phi \,\, \& \,\, (\mathbb{P}, V,a)\Vdash \psi  \} $ & \\
       &=& $\widehat{\phi} \cap \widehat{\psi}$. & \\
\end{tabular}
}}   

\end{proof}

 \begin{theorem}[Soundness]\label{soundness}
For any conceptual cumulative model $\mathcal{M}$, the consequence relation $\nc_\mathcal{M}$ it defines is a conceptual cumulative relation, i.e., $\nc_\mathcal{M}$ is closed under all the axioms and rules of $\mathbf{CC}$.
\end{theorem}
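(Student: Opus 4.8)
The plan is to verify, one by one, that $\nc_{\mathcal{M}}$ satisfies Reflexivity and each of the four inference rules LLE, RW, CM, and Cut (the lattice-logic axioms and rules of $\mathbf{L}$ do not involve $\nc$ and are about $\vdash$, which is interpreted soundly on any polarity-based model, so only the genuinely cumulative part needs work). Throughout I would unwind the definition: $\phi \nc_{\mathcal{M}} \psi$ means every $s$ minimal in $\widehat{\phi}$ lies in $\widehat{\psi}$, and I would use Lemma~\ref{lem:intersection} ($\widehat{\phi\wedge\psi} = \widehat{\phi}\cap\widehat{\psi}$) together with the fact that $\widehat{\cdot}$ is monotone with respect to $\vdash$, i.e.\ if $\mathbb{M}\models \phi\vdash\psi$ then $\widehat{\phi}\subseteq\widehat{\psi}$ (indeed, if $s\models\phi$ then every pointed model in $l(s)$ satisfies $\phi$, hence $\psi$, since $\mathbb{M},a\Vdash\phi$ implies $\mathbb{M},a\Vdash\psi$ when $\val{\phi}\subseteq\val{\psi}$).

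The easy cases: \emph{Reflexivity} $\phi\nc_{\mathcal{M}}\phi$ is immediate since any minimal $s$ in $\widehat{\phi}$ is in $\widehat{\phi}$. \emph{LLE}: if $\phi\vdash\psi$ and $\psi\vdash\phi$ are valid then $\widehat{\phi}=\widehat{\psi}$, so the minimal elements coincide and $\phi\nc_{\mathcal{M}}\chi$ gives $\psi\nc_{\mathcal{M}}\chi$. \emph{RW}: if $\phi\vdash\psi$ is valid then $\widehat{\phi}\subseteq\widehat{\psi}$; so from $\chi\nc_{\mathcal{M}}\phi$, every minimal $s$ in $\widehat{\chi}$ lies in $\widehat{\phi}\subseteq\widehat{\psi}$, hence $\chi\nc_{\mathcal{M}}\psi$.

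The substantive cases are CM and Cut, which both hinge on comparing minimal elements of $\widehat{\phi}$ and of $\widehat{\phi\wedge\psi}=\widehat{\phi}\cap\widehat{\psi}$, and this is exactly where smoothness is needed. For \emph{CM}, assume $\phi\nc_{\mathcal{M}}\psi$ and $\phi\nc_{\mathcal{M}}\chi$; I must show every $s$ minimal in $\widehat{\phi}\cap\widehat{\psi}$ lies in $\widehat{\chi}$. The key claim is that such an $s$ is in fact minimal in $\widehat{\phi}$: if not, by smoothness of $\widehat{\phi}$ there is $t\prec s$ with $t$ minimal in $\widehat{\phi}$; then $t\in\widehat{\psi}$ by $\phi\nc_{\mathcal{M}}\psi$, so $t\in\widehat{\phi}\cap\widehat{\psi}$, contradicting minimality of $s$ in that set. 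Once $s$ is minimal in $\widehat{\phi}$, $\phi\nc_{\mathcal{M}}\chi$ gives $s\in\widehat{\chi}$. For \emph{Cut}, assume $\phi\wedge\psi\nc_{\mathcal{M}}\chi$ and $\phi\nc_{\mathcal{M}}\psi$; take $s$ minimal in $\widehat{\phi}$, so $s\in\widehat{\psi}$ hence $s\in\widehat{\phi}\cap\widehat{\psi}$, and I claim $s$ is minimal there too — if $t\prec s$ with $t\in\widehat{\phi}\cap\widehat{\psi}\subseteq\widehat{\phi}$, that contradicts minimality of $s$ in $\widehat{\phi}$; so $s$ is minimal in $\widehat{\phi\wedge\psi}$ and $\phi\wedge\psi\nc_{\mathcal{M}}\chi$ yields $s\in\widehat{\chi}$. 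I expect the CM argument — specifically the step invoking smoothness of $\widehat{\phi}$ to promote a minimal element of the intersection to a minimal element of $\widehat{\phi}$ — to be the only place requiring care; everything else is bookkeeping with the definitions and Lemma~\ref{lem:intersection}.
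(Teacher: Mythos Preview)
Your proposal is correct and follows exactly the approach the paper takes: the paper appeals to Lemma~\ref{lem:intersection} and says the argument is analogous to \cite[Lemma~7]{kraus1990nonmonotonic}, and the detailed verification it has in mind for (CM) and (Cut) is precisely the one you wrote out, including the use of smoothness in (CM) to show that a minimal element of $\widehat{\phi}\cap\widehat{\psi}$ is already minimal in $\widehat{\phi}$. Your treatment of Reflexivity, (LLE), and (RW) via the monotonicity $\phi\vdash\psi\Rightarrow\widehat{\phi}\subseteq\widehat{\psi}$ is also exactly what is needed.
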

 \begin{proof}
The proof follows from Lemma \ref{lem:intersection}, analogous to the proof of the soundness of $\mathbf{C}$ w.r.t.~cumulative models given in \cite[Lemma 7]{kraus1990nonmonotonic}.

\end{proof}

We now  show that, given any conceptual cumulative relation $\nc$,  we can  build a conceptual cumulative model $\mathcal{M}$, such that  $\nc_\mathcal{M} = \nc$. Suppose $\nc$ satisfies the axioms and  rules of \textbf{CC}. All definitions will be relative to this relation.

\begin{definition}\label{def:normal}
    A pointed polarity-based model $\mathbb{M}_a$ is said to be normal for a concept $\phi$ if and only if  for all $\psi \in \mathcal{L}$,  $\phi\nc\psi$ implies $\mathbb{M}_a\Vdash\psi$. 
 \end{definition}
Therefore, a pointed polarity-based model is normal for a concept if its pointed object belongs to all of its plausible super-concepts. As relation $\nc$ is reflexive, for any normal pointed polarity-based model $\mathbb{M}_a$ for a concept $\phi$, $\mathbb{M}_a\Vdash\phi$.

\begin{lemma} \label{lem: normal cummulative}
  Let $\nc$ be a cumulative  consequence relation and  $\phi$, $\phi'$ be any concepts.  For all $\phi'$, $\phi \not \nc \phi'$ iff there exists a pointed polarity-based  model $\mathbb{M}_a$ normal for $\phi$, such that   $\mathbb{M}_a \not \Vdash \phi'$.
\end{lemma}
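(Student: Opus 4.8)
The plan is to establish the two directions separately. The ($\Leftarrow$) direction is immediate from Definition \ref{def:normal}: if $\mathbb{M}_a$ is normal for $\phi$ and $\mathbb{M}_a \not\Vdash \phi'$, then $\phi \nc \phi'$ would force $\mathbb{M}_a \Vdash \phi'$ by normality, contradicting $\mathbb{M}_a \not\Vdash \phi'$; hence $\phi \not\nc \phi'$. The ($\Rightarrow$) direction is the conceptual analogue of \cite[Lemma 8]{kraus1990nonmonotonic} and is the substantive one; it combines the modified compactness of Proposition \ref{prop:compactness} with the completeness of lattice-based propositional logic $\mathbf{L}$.

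For ($\Rightarrow$), assume $\phi \not\nc \phi'$ and set $\Gamma := \{\psi \in \mathcal{L} \mid \phi \nc \psi\}$; by Reflexivity $\phi \in \Gamma$, and a pointed polarity-based model is normal for $\phi$ exactly when it satisfies every formula in $\Gamma$. It therefore suffices to produce a pointed model satisfying all of $\Gamma$ and refuting $\phi'$, and by Proposition \ref{prop:compactness} (taken with $\phi_0 := \phi'$) it is enough to produce, for each finite $\Gamma' \subseteq \Gamma$, a pointed model satisfying $\Gamma'$ and refuting $\phi'$. Fix such a $\Gamma'$; enlarging it, we may assume $\phi \in \Gamma'$, say $\Gamma' = \{\psi_1, \dots, \psi_n\}$. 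The first ingredient is that $\mathbf{CC}$ proves the conjunction rule $\frac{\alpha \nc \beta \quad \alpha \nc \gamma}{\alpha \nc \beta \wedge \gamma}$: by CM one gets $\alpha \wedge \beta \nc \gamma$; from Reflexivity and RW (using the $\mathbf{L}$-theorem $(\alpha \wedge \beta) \wedge \gamma \vdash \beta \wedge \gamma$) one gets $(\alpha \wedge \beta) \wedge \gamma \nc \beta \wedge \gamma$; and Cut applied twice yields first $\alpha \wedge \beta \nc \beta \wedge \gamma$ and then $\alpha \nc \beta \wedge \gamma$. Iterating this rule from $\phi \nc \psi_1, \dots, \phi \nc \psi_n$ gives $\phi \nc \psi_1 \wedge \cdots \wedge \psi_n$. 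Now suppose, towards a contradiction, that every pointed model satisfying $\psi_1, \dots, \psi_n$ also satisfies $\phi'$. Since a pointed model satisfies $\psi_1 \wedge \cdots \wedge \psi_n$ iff it satisfies each $\psi_i$, this says that $\val{\psi_1 \wedge \cdots \wedge \psi_n} \subseteq \val{\phi'}$ in every polarity-based model, i.e.\ $\psi_1 \wedge \cdots \wedge \psi_n \vdash \phi'$ is valid on all polarity-based models; by completeness of $\mathbf{L}$ this sequent is provable in $\mathbf{L}$, hence belongs to the conceptual cumulative logic, and RW applied to $\phi \nc \psi_1 \wedge \cdots \wedge \psi_n$ gives $\phi \nc \phi'$, contradicting our assumption. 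Hence the required finite witnesses exist, and Proposition \ref{prop:compactness} delivers a single pointed polarity-based model $\mathbb{M}_a$ satisfying every formula in $\Gamma$ (so normal for $\phi$) with $\mathbb{M}_a \not\Vdash \phi'$.

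I expect the finitary step to be the main obstacle: one must move from the absence of a (finite) counter-model to the actual \emph{derivability} of a $\vdash$-sequent, which needs completeness of $\mathbf{L}$ rather than merely its compactness, and one must have the derived conjunction rule of $\mathbf{CC}$ on hand in order to collapse the finitely many consequences $\phi \nc \psi_i$ into the single sequent $\phi \nc \psi_1 \wedge \cdots \wedge \psi_n$ before invoking RW. Once this is in place, the passage from all finite $\Gamma' \subseteq \Gamma$ to $\Gamma$ itself is routine and is exactly what Proposition \ref{prop:compactness} is built to handle, playing here the role that ordinary compactness plays in the proof of \cite[Lemma 8]{kraus1990nonmonotonic}.
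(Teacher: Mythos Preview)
Your proof is correct and follows essentially the same route as the paper: both argue the nontrivial direction by contradiction, reduce via compactness (Proposition~\ref{prop:compactness}) to a finite subset $D$ of $\{\psi \mid \phi \nc \psi\}$, establish $\phi \nc \bigwedge D$ using (CM) and (Cut), and then apply (RW) to the sequent $\bigwedge D \vdash \phi'$. Your version is more explicit on two points the paper leaves terse---the derivation of the conjunction rule in $\mathbf{CC}$, and the observation that completeness of $\mathbf{L}$ is needed to promote the \emph{semantic} validity of $\bigwedge D \vdash \phi'$ (which is all compactness delivers) to a derivable $\mathcal{L}$-sequent usable in (RW)---but the underlying argument is the same.
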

\begin{proof}
    The \textit{if} part follows from Definition \ref{def:normal}. For the converse direction, suppose $\phi \not \nc \phi'$. We shall build a normal pointed polarity-based model for $\phi$ which does not  satisfy $\phi'$. It is enough to show that there exists a pointed polarity-based model $\mathbb{M}_b$ such that 
    $\mathbb{M}_b  \not \Vdash \phi'$, and $\mathbb{M}_b \Vdash \phi''$ for all $\phi \nc \phi''$. 
    Suppose not. Then by compactness, there exists a finite set $D \subseteq \{\phi'' \mid \phi \nc \phi''\}$ such that $\bigwedge D  \vdash \phi'$.  By (CM) and (Cut), we have $\phi \nc \bigwedge D $. By (RW), we get $\phi \nc \phi'$, which is a contradiction. 
\end{proof}

We shall say that $\mathcal{L}$-formulas $\phi$ and $\psi$ are {\em equivalent} and  write $\phi\sim\psi$ if $\phi\nc\psi$ and $\psi\nc\phi$.

\begin{lemma}\label{lem: cummulative equivalence}
    $\phi\sim\psi$ if and only if $\forall\chi\in\mathcal{L}, \phi\nc\chi \Leftrightarrow \psi\nc\chi$. Hence, the relation $\sim$ is an equivalence relation. 
\end{lemma}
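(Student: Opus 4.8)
The statement to prove is Lemma \ref{lem: cummulative equivalence}: that $\phi\sim\psi$ (i.e., $\phi\nc\psi$ and $\psi\nc\phi$) if and only if for all $\chi\in\mathcal{L}$, $\phi\nc\chi \Leftrightarrow \psi\nc\chi$; and consequently that $\sim$ is an equivalence relation. I would prove the biconditional first, then derive the ``equivalence relation'' claim as a corollary. The forward direction should be the routine one, using (Cut), (CM), and reflexivity; the backward direction is essentially immediate from reflexivity; and the equivalence-relation claim follows formally from the biconditional characterization.

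\emph{Backward direction.} Assume $\forall\chi,\ \phi\nc\chi \Leftrightarrow \psi\nc\chi$. By the Reflexivity axiom $\phi\nc\phi$, so instantiating $\chi:=\phi$ gives $\psi\nc\phi$. Symmetrically, $\psi\nc\psi$ and instantiating $\chi:=\psi$ gives $\phi\nc\psi$. Hence $\phi\sim\psi$.

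\emph{Forward direction.} Assume $\phi\nc\psi$ and $\psi\nc\phi$. By symmetry of the claim it suffices to show that $\phi\nc\chi$ implies $\psi\nc\chi$ for arbitrary $\chi$. So suppose $\phi\nc\chi$. From $\phi\nc\psi$ and $\phi\nc\chi$, applying (CM) yields $\phi\wedge\psi\nc\chi$. Now I want to move from $\phi\wedge\psi$ to $\psi$ as the antecedent; the natural tool is (Cut), which has the shape $\frac{\phi'\wedge\psi'\nc\chi'\quad \phi'\nc\psi'}{\phi'\nc\chi'}$. To match, I take $\phi':=\psi$ and $\psi':=\phi$: I need $\psi\wedge\phi\nc\chi$ (which is $\phi\wedge\psi\nc\chi$ up to commutativity of $\wedge$ — here I should be slightly careful and invoke that $\phi\wedge\psi\vdash\psi\wedge\phi$ and $\psi\wedge\phi\vdash\phi\wedge\psi$ are theorems of $\mathbf{L}$ together with (LLE) to rewrite the antecedent, or just note $\widehat{\phi\wedge\psi}=\widehat{\psi\wedge\phi}$ is forced and use LLE) and I need $\psi\nc\phi$, which is our hypothesis. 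Then (Cut) gives $\psi\nc\chi$, as desired. The converse implication $\psi\nc\chi \Rightarrow \phi\nc\chi$ is obtained by swapping the roles of $\phi$ and $\psi$ throughout, using $\psi\nc\phi$ and $\psi\nc\chi$ with (CM) and (Cut).

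\emph{The equivalence relation claim.} Once the biconditional is established, reflexivity of $\sim$ follows from $\phi\nc\phi$; symmetry is built into the definition of $\sim$; and transitivity follows because the condition ``$\forall\chi,\ \phi\nc\chi\Leftrightarrow\psi\nc\chi$'' is manifestly transitive in $\phi,\psi$, and the biconditional says this condition is equivalent to $\phi\sim\psi$.

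\emph{Expected obstacle.} There is no deep obstacle here; the only thing to watch is the bookkeeping with commutativity of $\wedge$ when aligning the antecedent $\phi\wedge\psi$ with the pattern required by (Cut). Since $\mathbf{L}$ proves $\phi\wedge\psi\vdash\psi\wedge\phi$ and $\psi\wedge\phi\vdash\phi\wedge\psi$ (both are instances of, or derivable from, the $\wedge$-axioms $p\wedge q\vdash p$, $p\wedge q\vdash q$ together with the meet-introduction rule), an application of (LLE) legitimately rewrites the antecedent, so this is a purely routine step rather than a genuine difficulty.
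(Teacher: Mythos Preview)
Your proof is correct and essentially identical to the paper's: the paper handles the \emph{if} direction via Reflexivity and the \emph{only if} direction by invoking a derived rule it calls (Equivalence), $\frac{\phi\nc\psi\quad \psi\nc\phi\quad \phi\nc\chi}{\psi\nc\chi}$, noting that it is obtained from (CM), (LLE), and (Cut)---which is exactly the derivation you spell out inline, including the use of (LLE) for commutativity of $\wedge$. Your treatment of the equivalence-relation consequence is also fine and matches the intent of the paper's ``Hence'' clause.
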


\begin{proof}
    The {\em if} part follows from the reflexivity, and the {\em only if} part follows from the following derived rule of $\mathbf{CC}$:

{{
\centering
\par
    $\frac{\phi\nc\psi\quad \psi\nc\phi\quad \phi\nc\chi}{\psi\nc\chi}$\quad\text{\footnotesize (Equivalence)}
    
}}
\noindent The (Equivalence) rule can be derived using the rules (CM), (LLE) and (Cut). 
\end{proof}

We use $\phi/_\sim$ to denote the equivalence class of $\phi$ under $\sim$. 

\begin{definition} \label{def:order cummulative}
   $\phi/_\sim \leq \psi/_\sim$ if and only if there exists $\chi\in\phi/_\sim$ such that $\psi\nc\chi$. 
\end{definition}
From this, we can prove the following lemma analogously to \cite[Lemma 10]{kraus1990nonmonotonic}.

\begin{lemma}
The relation $\leq$ defined above is antisymmetric.
\end{lemma}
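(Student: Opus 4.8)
The plan is to show that the relation $\leq$ on $\sim$-equivalence classes given in Definition~\ref{def:order cummulative} is antisymmetric, i.e., if $\phi/_\sim \leq \psi/_\sim$ and $\psi/_\sim \leq \phi/_\sim$, then $\phi/_\sim = \psi/_\sim$, which by definition of $/_\sim$ means $\phi \sim \psi$. First I would unpack the two hypotheses. From $\phi/_\sim \leq \psi/_\sim$ there is some $\chi \in \phi/_\sim$ (so $\chi \sim \phi$) with $\psi \nc \chi$. From $\psi/_\sim \leq \phi/_\sim$ there is some $\xi \in \psi/_\sim$ (so $\xi \sim \psi$) with $\phi \nc \xi$.

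Next I would translate everything to the chosen representatives $\phi$ and $\psi$ using Lemma~\ref{lem: cummulative equivalence}, which says $\alpha \sim \beta$ iff $\alpha$ and $\beta$ entail (via $\nc$) exactly the same formulas. Since $\chi \sim \phi$ and $\psi \nc \chi$, we get $\psi \nc \phi$. Symmetrically, since $\xi \sim \psi$ and $\phi \nc \xi$, we get $\phi \nc \psi$. But $\phi \nc \psi$ and $\psi \nc \phi$ is exactly the definition of $\phi \sim \psi$, hence $\phi/_\sim = \psi/_\sim$, as desired.

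The main (and really only) obstacle is making sure the ``transport along $\sim$'' step is justified, i.e., that from $\psi \nc \chi$ and $\chi \sim \phi$ one may conclude $\psi \nc \phi$. This is precisely the content of Lemma~\ref{lem: cummulative equivalence}: $\chi \sim \phi$ gives $\forall \theta\, (\chi \nc \theta \Leftrightarrow \phi \nc \theta)$, but what we need is rather to replace $\chi$ on the \emph{right} of $\nc$. For this one uses that $\chi \sim \phi$ in particular gives $\chi \nc \phi$ and $\phi \nc \chi$; combining $\psi \nc \chi$ with $\chi \nc \phi$ would require something like a right-cut/transitivity on the right argument, which is exactly the (RW)-style reasoning packaged into the (Equivalence) rule derived in the proof of Lemma~\ref{lem: cummulative equivalence}. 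Concretely, from $\psi \nc \chi$, $\chi \nc \phi$ and $\phi \nc \chi$ the (Equivalence) rule yields $\psi \nc \phi$. So the clean way to present the argument is: invoke the derived (Equivalence) rule twice, once in each direction, to get $\psi \nc \phi$ and $\phi \nc \psi$, and conclude $\phi \sim \psi$. I do not expect any genuine difficulty beyond citing these already-established facts; the proof is a few lines.
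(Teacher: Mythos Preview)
Your plan is the natural one, but the step you flag as ``the main obstacle'' is in fact a genuine gap, and your proposed resolution does not work. The (Equivalence) rule
\[
\frac{\alpha\nc\beta\quad \beta\nc\alpha\quad \alpha\nc\theta}{\beta\nc\theta}
\]
licenses replacing $\sim$-equivalent formulas on the \emph{left} of $\nc$, not on the right. Your claimed instance ``from $\psi \nc \chi$, $\chi \nc \phi$, $\phi \nc \chi$ derive $\psi \nc \phi$'' does not match the rule for any choice of $\alpha,\beta,\theta$. Worse, right-replacement under $\sim$ is simply not derivable in $\mathbf{CC}$: take a three-state preferential model with $w_3\prec w_1\prec w_2$, $\widehat{\psi}=\{w_1,w_2\}$, $\widehat{\chi}=\{w_1,w_3\}$, $\widehat{\phi}=\{w_3\}$; then $\psi\nc\chi$, $\chi\nc\phi$, $\phi\nc\chi$ all hold, yet $\psi\not\nc\phi$. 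So you cannot hope to obtain $\psi\nc\phi$ from the first hypothesis alone.

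The correct argument (this is how \cite[Lemma~10]{kraus1990nonmonotonic} proceeds, and what the paper has in mind) uses \emph{both} hypotheses together, applying (Equivalence) only on the left. From $\phi\nc\xi$ and $\chi\sim\phi$ you get $\chi\nc\xi$; from $\psi\nc\chi$ and $\xi\sim\psi$ you get $\xi\nc\chi$. Hence $\chi\sim\xi$, and since $\sim$ is an equivalence relation (Lemma~\ref{lem: cummulative equivalence}) you conclude $\phi\sim\chi\sim\xi\sim\psi$, i.e.\ $\phi/_\sim=\psi/_\sim$. The moral is that antisymmetry here is not two independent one-sided facts but genuinely needs the interaction of the two $\leq$-assumptions.
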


\begin{remark}
    Note that the above relation is well-defined, i.e., it does not depend on the choice of the representatives $\phi$ and $\psi$.
\end{remark}


We define a conceptual cumulative model $\mathcal{M}=(S,l,\prec)$ as follows:  $S=\mathcal{L} /_\sim$ is a set of all equivalence classes of concepts under relation $\sim$. $l(\phi/_\sim)= \{\mathbb{M}_a\mid\text{$\mathbb{M}_a$ is a normal model for} \ \phi\}$, and $\phi/_\sim \prec\psi/_\sim $ iff $\phi/_\sim\leq\psi/_\sim$ and $\phi/_\sim \neq\psi/_\sim$. 
It is  easy  to  check that map $l$ is well-defined, 
and that $\prec$ is asymmetric. By Lemma \ref{lem: normal cummulative} and Definition \ref{def:order cummulative}, we can prove the following two lemmas analogously to \cite[Lemma 11]{kraus1990nonmonotonic} and \cite[Lemma 12]{kraus1990nonmonotonic}.

\begin{lemma}\label{minconcept}
For any concept $\phi$, the state $\phi/_\sim$  is the minimum of $\widehat{\phi}$. 
\end{lemma}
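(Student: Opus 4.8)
The plan is to show two things about the state $\phi/_\sim$: first, that it belongs to $\widehat{\phi}$, and second, that it lies $\prec$-below every other state in $\widehat{\phi}$. For the first part, I recall that $l(\phi/_\sim)$ is by definition the set of all pointed polarity-based models normal for $\phi$. Since $\nc$ is reflexive, $\phi \nc \phi$, so by Definition \ref{def:normal} every $\mathbb{M}_a \in l(\phi/_\sim)$ satisfies $\mathbb{M}_a \Vdash \phi$; hence $\phi/_\sim \models \phi$, i.e.\ $\phi/_\sim \in \widehat{\phi}$.

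For the second part, let $\psi/_\sim \in \widehat{\phi}$ be arbitrary with $\psi/_\sim \neq \phi/_\sim$; I must show $\phi/_\sim \prec \psi/_\sim$, which by the definition of $\prec$ reduces to showing $\phi/_\sim \leq \psi/_\sim$. By Definition \ref{def:order cummulative}, it suffices to exhibit some $\chi \in \phi/_\sim$ with $\psi \nc \chi$; the natural candidate is $\chi = \phi$ itself. So the goal becomes: $\psi/_\sim \in \widehat{\phi}$ implies $\psi \nc \phi$. Now $\psi/_\sim \in \widehat{\phi}$ means every pointed polarity-based model normal for $\psi$ satisfies $\phi$. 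If $\psi \not\nc \phi$, then by Lemma \ref{lem: normal cummulative} there would be a pointed polarity-based model $\mathbb{M}_a$ normal for $\psi$ with $\mathbb{M}_a \not\Vdash \phi$ — contradicting $\psi/_\sim \models \phi$. Hence $\psi \nc \phi$, giving $\phi/_\sim \leq \psi/_\sim$, and since $\psi/_\sim \neq \phi/_\sim$ we conclude $\phi/_\sim \prec \psi/_\sim$.

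Combining the two parts: $\phi/_\sim \in \widehat{\phi}$ and $\phi/_\sim \prec s$ for every $s \in \widehat{\phi}$ with $s \neq \phi/_\sim$, which is exactly the statement that $\phi/_\sim$ is the minimum of $\widehat{\phi}$ in the sense defined in Section \ref{ssec:classical cumulative reasoning}. The main subtlety — and the only place real work is hidden — is the appeal to Lemma \ref{lem: normal cummulative}, whose nontrivial direction itself rested on the modified compactness (Proposition \ref{prop:compactness}) together with (CM), (Cut) and (RW); everything else here is bookkeeping with the definitions of $l$, $\leq$, $\prec$ and $\widehat{\cdot}$. One should also note in passing that this argument implicitly reuses the well-definedness of $l$ and of $\leq$ (the Remark above), so that "$\chi = \phi \in \phi/_\sim$" is a legitimate witness.
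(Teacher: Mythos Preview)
Your proof is correct and follows essentially the same approach as the paper's own argument: first verify $\phi/_\sim \in \widehat{\phi}$ via reflexivity of $\nc$ and the definition of normality, then for any other $\psi/_\sim \in \widehat{\phi}$ deduce $\psi \nc \phi$ from Lemma~\ref{lem: normal cummulative} and conclude $\phi/_\sim \prec \psi/_\sim$ via Definition~\ref{def:order cummulative}. The structure, the key lemma invoked, and the witness $\chi = \phi$ all coincide with the paper's treatment.
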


\begin{lemma}\label{lem:characterizeofconcequencerelation}
For any concepts $\phi, \psi$, $\phi\nc\psi$ if and only if $\phi\nc_\mathcal{M}\psi$.
\end{lemma}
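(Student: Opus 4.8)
The plan is to unfold the definition of $\nc_\mathcal{M}$ and reduce both directions to Lemma \ref{minconcept} and Lemma \ref{lem: normal cummulative}. Recall that $\phi \nc_\mathcal{M} \psi$ means that every state minimal in $\widehat{\phi}$ lies in $\widehat{\psi}$. By Lemma \ref{minconcept}, the state $\phi/_\sim$ is the \emph{minimum} of $\widehat{\phi}$; in particular it is the unique minimal element of $\widehat{\phi}$ (since $\prec$ is asymmetric, a minimum is the only minimal element). Hence $\phi \nc_\mathcal{M} \psi$ is equivalent to the single condition $\phi/_\sim \in \widehat{\psi}$, i.e.\ $\phi/_\sim \models \psi$, i.e.\ $\mathbb{M}_a \Vdash \psi$ for every $\mathbb{M}_a \in l(\phi/_\sim)$, i.e.\ $\mathbb{M}_a \Vdash \psi$ for every pointed polarity-based model $\mathbb{M}_a$ normal for $\phi$.

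For the forward direction, suppose $\phi \nc \psi$. Then by Definition \ref{def:normal}, every pointed polarity-based model normal for $\phi$ satisfies $\psi$, so $\phi/_\sim \models \psi$, and by the reduction above $\phi \nc_\mathcal{M} \psi$. For the converse, suppose $\phi \not\nc \psi$. By Lemma \ref{lem: normal cummulative} applied with $\phi' = \psi$, there exists a pointed polarity-based model $\mathbb{M}_a$ normal for $\phi$ with $\mathbb{M}_a \not\Vdash \psi$. This $\mathbb{M}_a$ belongs to $l(\phi/_\sim)$, so $\phi/_\sim \not\models \psi$, i.e.\ $\phi/_\sim \notin \widehat{\psi}$; since $\phi/_\sim$ is the unique minimal state of $\widehat{\phi}$, this witnesses $\phi \not\nc_\mathcal{M} \psi$.

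The only genuine subtlety — and the step I would be most careful about — is the passage "minimum of $\widehat{\phi}$ implies unique minimal element of $\widehat{\phi}$," which needs the established facts that $\phi/_\sim \in \widehat{\phi}$ (part of Lemma \ref{minconcept}) and that $\prec$ is asymmetric on $S$ (already noted when $\mathcal{M}$ was defined): if $t \in \widehat{\phi}$ were minimal and $t \neq \phi/_\sim$, then $\phi/_\sim \prec t$ by the minimum property, contradicting minimality of $t$. Everything else is a direct chain of definitional rewrites, so no serious obstacle is expected; one should simply make sure that Lemma \ref{lem: normal cummulative} is invoked in the correct direction (its "only if" part, to produce the falsifying normal model) and that $l$ being well-defined — already asserted — justifies identifying $l(\phi/_\sim)$ with the set of normal models for $\phi$.
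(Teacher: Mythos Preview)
Your proposal is correct and follows essentially the same route as the paper: reduce $\phi\nc_\mathcal{M}\psi$ to the single condition $\phi/_\sim\models\psi$ via Lemma~\ref{minconcept} (using asymmetry of $\prec$ to pass from minimum to unique minimal element), and then identify that condition with $\phi\nc\psi$ via the definition of $l(\phi/_\sim)$ together with Lemma~\ref{lem: normal cummulative}. The paper's proof is just a terser version of exactly this chain of equivalences.
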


It is immediate from the above lemmas to get the representation theorem for conceptual cumulative consequence relation as follow.
\begin{theorem}\label{repthm}
    A conceptual consequence relation is a cumulative consequence relation iff it is defined by some conceptual  cumulative model.
\end{theorem}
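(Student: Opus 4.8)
The plan is to assemble Theorem~\ref{repthm} from the pieces already established, treating it as the conjunction of a soundness direction and a completeness direction. For the ``only if'' direction (every conceptual cumulative model defines a conceptual cumulative consequence relation), I would simply invoke Theorem~\ref{soundness}: given a conceptual cumulative model $\mathcal{M}$, the relation $\nc_\mathcal{M}$ is closed under all axioms and rules of $\mathbf{CC}$, which is exactly the definition of being a conceptual cumulative consequence relation. So that half is a one-line citation.

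For the ``if'' direction (every conceptual cumulative consequence relation $\nc$ arises as $\nc_\mathcal{M}$ for some conceptual cumulative model $\mathcal{M}$), I would carry out the canonical-model construction already set up just before the statement. First, fix a relation $\nc$ satisfying the axioms and rules of $\mathbf{CC}$ and form the quotient $S = \mathcal{L}/_\sim$, where $\sim$ is the equivalence relation of Lemma~\ref{lem: cummulative equivalence}. Then define $l(\phi/_\sim)$ to be the set of pointed polarity-based models normal for $\phi$ (well-defined by Lemma~\ref{lem: cummulative equivalence}, since normality depends only on the set $\{\psi \mid \phi \nc \psi\}$), and set $\phi/_\sim \prec \psi/_\sim$ iff $\phi/_\sim \le \psi/_\sim$ and $\phi/_\sim \ne \psi/_\sim$, using the order $\le$ of Definition~\ref{def:order cummulative}. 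The key verifications are: (i) $l$ assigns each state a nonempty set --- this needs Lemma~\ref{lem: normal cummulative} applied with $\phi' $ chosen so that $\phi \not\nc \phi'$ (e.g.\ any $\phi'$ with $\phi \not\nc \phi'$; if no such $\phi'$ exists one still needs a normal model to exist, which follows from Lemma~\ref{lem: normal cummulative} read in the appropriate degenerate way, or directly from compactness); (ii) $\mathcal{M}=(S,l,\prec)$ is a \emph{conceptual cumulative model}, i.e.\ each $\widehat{\phi}$ is smooth --- this follows from Lemma~\ref{minconcept}, which shows $\phi/_\sim$ is the \emph{minimum} (hence a minimal element below every other element) of $\widehat{\phi}$, so smoothness is automatic; and (iii) $\nc_\mathcal{M} = \nc$ --- this is precisely Lemma~\ref{lem:characterizeofconcequencerelation}. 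Chaining these, $\nc$ is defined by the conceptual cumulative model $\mathcal{M}$, completing the direction.

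The main obstacle, such as it is, is not in Theorem~\ref{repthm} itself but was already absorbed into Lemma~\ref{lem: normal cummulative}: producing a single pointed polarity-based model that satisfies all plausible consequences $\phi''$ of $\phi$ while falsifying a given non-consequence $\phi'$. That step is where the modified compactness (Proposition~\ref{prop:compactness}) and the joint-dissatisfaction principle (Proposition~\ref{prop:joint dissatiscfaction}) do the real work, compensating for the absence of negation and implication in $\mathcal{L}$; the combinatorial closure under $(\mathrm{CM})$, $(\mathrm{Cut})$, $(\mathrm{RW})$ is what lets the finite conjunction $\bigwedge D$ be pulled back to a single $\nc$-consequence. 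Once that lemma is in hand, the proof of Theorem~\ref{repthm} is purely a matter of bookkeeping: check $l$ is well-defined and nonempty-valued, check $\prec$ is asymmetric, invoke Lemma~\ref{minconcept} for smoothness, and invoke Lemma~\ref{lem:characterizeofconcequencerelation} for the identification $\nc_\mathcal{M}=\nc$. Hence I would present the proof of Theorem~\ref{repthm} in two short paragraphs, one per direction, explicitly naming Theorem~\ref{soundness}, Lemma~\ref{minconcept}, and Lemma~\ref{lem:characterizeofconcequencerelation} as the load-bearing references, and noting that the construction of $\mathcal{M}$ is exactly the one displayed above the statement.
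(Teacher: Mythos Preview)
Your proposal is correct and matches the paper's approach exactly: the paper simply says the theorem is ``immediate from the above lemmas,'' namely Theorem~\ref{soundness} for one direction and Lemmas~\ref{minconcept} and~\ref{lem:characterizeofconcequencerelation} (applied to the canonical model $\mathcal{M}=(\mathcal{L}/_\sim,l,\prec)$ constructed just before the statement) for the other. One small inaccuracy in your side commentary: Proposition~\ref{prop:joint dissatiscfaction} is \emph{not} invoked in the proof of Lemma~\ref{lem: normal cummulative} (only compactness, Proposition~\ref{prop:compactness}, is used there); the joint-dissatisfaction principle enters later, in Lemma~\ref{lem:supernormal}, to build \emph{supernormal} models for the preferential case.
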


In fact, the conceptual cumulative model constructed in the above proofs is a strong conceptual cumulative model. Thus, we have proved a stronger result stating that any conceptual cumulative consequence relation  is defined by some strong conceptual  cumulative model. The following corollary follows from Theorem \ref{soundness}, and Theorem \ref{repthm} analogously to the proof of \cite[Corollary 1]{kraus1990nonmonotonic}.
\begin{corollary}
  Let $\mathrm{K}$ be a set of cumulative $\mathcal{L}$-sequents and $\alpha, \beta\in\mathcal{L}$. The following  statements are  equivalent.

\noindent 1.~For any conceptual cumulative model $\mathcal{M}$, $\mathcal{M}\models\mathrm{K}$ implies $\mathcal{M}\models\alpha\nc\beta$.

\noindent 2.~$\alpha\nc\beta$ has a proof from $\mathrm{K}$ in the system $\mathbf{CC}$.  (In  this case, we say  {\em $\mathrm{K}$ cumulatively entails} $\alpha\nc\beta$.
\end{corollary}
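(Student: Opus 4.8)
The plan is to obtain the corollary directly from Theorem~\ref{soundness} and Theorem~\ref{repthm}, mirroring the proof of \cite[Corollary~1]{kraus1990nonmonotonic}; no fresh model construction is required.

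For $(2)\Rightarrow(1)$, suppose $\alpha\nc\beta$ has a proof from $\mathrm{K}$ in $\mathbf{CC}$ and let $\mathcal{M}$ be any conceptual cumulative model with $\mathcal{M}\models\mathrm{K}$. By Theorem~\ref{soundness}, $\nc_\mathcal{M}$ is a conceptual cumulative consequence relation, so it contains every instance of Reflexivity, is closed under LLE, RW, CM and Cut, and the $\vdash$-side conditions occurring in these rules (being $\mathbf{L}$-theorems) hold in $\mathcal{M}$ in the sense that $\widehat{\phi}\subseteq\widehat{\psi}$ whenever $\phi\vdash\psi$ is an $\mathbf{L}$-theorem. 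Since $\mathcal{M}\models\mathrm{K}$ is the statement $\mathrm{K}\subseteq{\nc_\mathcal{M}}$, an induction on the length of the given derivation shows that every cumulative sequent occurring in it belongs to $\nc_\mathcal{M}$: the leaves are either axioms of $\mathbf{CC}$ (in $\nc_\mathcal{M}$ by Theorem~\ref{soundness}) or members of $\mathrm{K}$ (in $\nc_\mathcal{M}$ by assumption), and each rule application is absorbed by the corresponding closure property of $\nc_\mathcal{M}$. In particular $\alpha\nc_\mathcal{M}\beta$, i.e.\ $\mathcal{M}\models\alpha\nc\beta$.

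For $(1)\Rightarrow(2)$ I argue by contraposition: assuming $\alpha\nc\beta$ is not provable from $\mathrm{K}$ in $\mathbf{CC}$, I exhibit a conceptual cumulative model validating $\mathrm{K}$ but not $\alpha\nc\beta$. Let $\nc^{\ast}$ be the intersection of all conceptual cumulative consequence relations extending $\mathrm{K}$. Because the rules of $\mathbf{CC}$ (and the rules of $\mathbf{L}$ governing the $\vdash$-side conditions) have finitely many premises and a single conclusion, an arbitrary intersection of relations closed under them is again closed under them, and every such relation contains Reflexivity; hence $\nc^{\ast}$ is itself a conceptual cumulative consequence relation, indeed the least one extending $\mathrm{K}$. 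A routine check identifies $\nc^{\ast}$ with the set of cumulative $\mathcal{L}$-sequents provable from $\mathrm{K}$ in $\mathbf{CC}$: the provable sequents form a conceptual cumulative consequence relation extending $\mathrm{K}$ and so include $\nc^{\ast}$, while conversely $\nc^{\ast}$ extends $\mathrm{K}$ and is closed under the rules, so it includes every provable sequent. Thus $\alpha\not\nc^{\ast}\beta$. Now Theorem~\ref{repthm} supplies a conceptual cumulative model $\mathcal{M}$ with $\nc_\mathcal{M}={\nc^{\ast}}$; then $\mathrm{K}\subseteq{\nc^{\ast}}={\nc_\mathcal{M}}$ gives $\mathcal{M}\models\mathrm{K}$, while $\alpha\not\nc^{\ast}\beta$ gives $\mathcal{M}\not\models\alpha\nc\beta$, contradicting~(1).

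The mathematics here is light; the only point needing care is the bookkeeping that makes ``provable from $\mathrm{K}$ in $\mathbf{CC}$'' coincide with membership in the least conceptual cumulative consequence relation extending $\mathrm{K}$ --- in particular that the lattice-logic layer (used, e.g., to discharge the side conditions $\phi\vdash\psi$ in LLE and RW) is treated uniformly on the derivational side and on the semantic side, where it is automatic since every conceptual cumulative model respects $\mathbf{L}$. Granting that, the corollary is simply the combination of representation ($\nc^{\ast}=\nc_\mathcal{M}$ for a suitable $\mathcal{M}$) with soundness, i.e.\ strong completeness of $\mathbf{CC}$ relative to an arbitrary set $\mathrm{K}$ of background conditionals.
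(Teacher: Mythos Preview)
Your proof is correct and follows the same approach the paper indicates: it derives the corollary from Theorem~\ref{soundness} and Theorem~\ref{repthm}, in the style of \cite[Corollary~1]{kraus1990nonmonotonic}. Your additional care in identifying the least conceptual cumulative consequence relation extending $\mathrm{K}$ with the set of $\mathbf{CC}$-provable sequents from $\mathrm{K}$, and in handling the $\mathbf{L}$-side conditions, simply makes explicit what the paper leaves to the reader.
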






The following corollary is immediate from the above corollary. 
\begin{corollary}
     $\mathrm{K}$ entails $\alpha\nc\beta$ if and only if a finite subset of $\mathrm{K}$ does.
\end{corollary}

\subsection{Conceptual Cumulative Ordered Models and Conceptual  Cumulative Reasoning with Loop}

In this section, we introduce conceptual cumulative ordered models and the corresponding reasoning system, conceptual cumulative reasoning with loops $\mathbf{CCL}$. 

\begin{definition}\label{def:conceptual cumulative ordered models}
A conceptual cumulative model $\mathcal{M}=(S,l,\prec)$ is said to be a {\em conceptual cumulative ordered model} if $\prec$ is a strict partial order.  
\end{definition}

Thus, a conceptual cumulative ordered model is a conceptual cumulative model in which the preference relation is asymmetric and transitive. As the rule (Loop) in propositional setting contains no connectives, it can also be seen as a rule for conceptual reasoning. 

{\small
\[\frac{\phi_0\nc\phi_1\quad \phi_1\nc\phi_2\quad\dots\quad\phi_{n-1}\nc\phi_n\quad\phi_n\nc\phi_0}{\phi_0\nc\phi_n}\quad\text{\footnotesize (Loop)}
\]
}
\noindent We call the extension of $\mathbf{CC}$ with the above rule (Loop) $\mathbf{CCL}$ ({\em conceptual cumulative reasoning with loop}). A consequence relation that satisfies all the  rules and axioms of $\mathbf{CCL}$ is said to be {\em loop-cumulative}. Similar to the propositional setting, we can show that the conceptual cumulative ordered models represent the loop-cumulative consequence relations.  The following rule is derivable in $\mathbf{CCL}$  using rules (Loop) and (Equivalence): for any $i,j\in\{0,\dots n\}$,
{\small
\[\frac{\phi_0\nc\phi_1\quad \phi_1\nc\phi_2\quad\dots\quad\phi_{n-1}\nc\phi_n\quad\phi_n\nc\phi_0}{\phi_i\nc\phi_j}.
\]
}

\noindent  The following proposition is proven analogously to the \cite[Lemma 14]{kraus1990nonmonotonic}.

\begin{proposition}
     The rule (Loop) is valid on all conceptual cumulative ordered models.
\end{proposition}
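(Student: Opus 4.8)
The plan is to show that if $\mathcal{M}=(S,l,\prec)$ is a conceptual cumulative ordered model, then the consequence relation $\nc_{\mathcal{M}}$ validates (Loop). So I would assume the hypotheses $\phi_0 \nc_{\mathcal{M}} \phi_1$, $\phi_1 \nc_{\mathcal{M}} \phi_2$, \dots, $\phi_{n-1} \nc_{\mathcal{M}} \phi_n$, $\phi_n \nc_{\mathcal{M}} \phi_0$, and aim to derive $\phi_0 \nc_{\mathcal{M}} \phi_n$; that is, I must show every $s$ minimal in $\widehat{\phi_0}$ satisfies $s \in \widehat{\phi_n}$. The key structural observation I would use is that the hypotheses create a ``cycle'' among the sets $\widehat{\phi_0},\dots,\widehat{\phi_n}$: since $\phi_i \nc_{\mathcal{M}} \phi_{i+1}$ (indices mod $n+1$), every state minimal in $\widehat{\phi_i}$ lies in $\widehat{\phi_{i+1}}$.

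First I would fix $s$ minimal in $\widehat{\phi_0}$ and chase it around the cycle. The idea is to show that $s$ is in fact minimal in every $\widehat{\phi_i}$, and in particular in $\widehat{\phi_n}$, which gives $s \in \widehat{\phi_n}$ directly (or, more simply, $s \in \widehat{\phi_n}$ follows once we know $s$ is minimal in $\widehat{\phi_{n-1}}$ via the hypothesis $\phi_{n-1}\nc_{\mathcal{M}}\phi_n$). To run this argument I would proceed inductively: $s$ minimal in $\widehat{\phi_0}$ implies (by $\phi_0 \nc_{\mathcal{M}} \phi_1$) that $s \in \widehat{\phi_1}$. The crucial step is to upgrade ``$s \in \widehat{\phi_1}$'' to ``$s$ minimal in $\widehat{\phi_1}$''. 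Suppose not; then by smoothness of $\widehat{\phi_1}$ there is $t$ minimal in $\widehat{\phi_1}$ with $t \prec s$. Now push $t$ around the rest of the cycle: by the chain of hypotheses $\phi_1\nc_{\mathcal{M}}\phi_2,\dots,\phi_n\nc_{\mathcal{M}}\phi_0$, starting from $t$ minimal in $\widehat{\phi_1}$ we get $t \in \widehat{\phi_2}$, and then — reapplying the same smoothness-plus-minimality upgrade at each stage — $t$ is minimal in $\widehat{\phi_2}$, \dots, eventually $t \in \widehat{\phi_0}$. But then $t \in \widehat{\phi_0}$ and $t \prec s$ contradicts the minimality of $s$ in $\widehat{\phi_0}$. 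Hence $s$ is minimal in $\widehat{\phi_1}$, and iterating gives $s$ minimal in each $\widehat{\phi_i}$, so $s \in \widehat{\phi_n}$ as desired.

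The point where transitivity and asymmetry of $\prec$ (i.e. the fact that $\mathcal{M}$ is \emph{ordered}, not merely cumulative) enters is in this repeated ``upgrade'' step: when I find $t$ minimal in $\widehat{\phi_1}$ with $t \prec s$ and then later need a $t'$ minimal in $\widehat{\phi_2}$ with $t' \prec s$, I have $t' \prec t \prec s$ and must conclude $t' \prec s$ — this requires transitivity. Likewise, the contradiction at the end uses that $t \prec s$ rules out $s$ being minimal, and more subtly that we cannot have $s \prec t \prec s$; asymmetry (automatic from a strict partial order) keeps the chain of strictly ``better'' states from collapsing. So the main obstacle — and the reason (Loop) fails for general cumulative models — is precisely keeping track that the state $t$ obtained after going around the loop is strictly $\prec$-below the original $s$; without transitivity the inequalities $t' \prec t \prec s$ do not compose, and the argument breaks. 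I would formalize the loop-chasing by a finite induction on the position $i$ in the cycle, carrying the invariant ``$s$ is minimal in $\widehat{\phi_i}$'' (and, inside the proof of the upgrade, the invariant ``any $\prec$-predecessor witness produced so far is still $\prec s$''), which keeps all the bookkeeping explicit. This mirrors the proof of \cite[Lemma 14]{kraus1990nonmonotonic}, the only change being that worlds are replaced by pointed polarity-based models, which plays no role since the argument is entirely about the sets $\widehat{\phi_i}$ and the order $\prec$.
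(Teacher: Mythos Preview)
Your proposal is correct and follows essentially the same route as the paper (which simply defers to \cite[Lemma 14]{kraus1990nonmonotonic}): chase a descending chain of minimal states around the cycle and use transitivity of $\prec$ to close it against the minimality of the starting state. The paper's version is marginally more direct---rather than proving inductively that the fixed $s$ is itself minimal in every $\widehat{\phi_i}$, it builds states $s_i$ minimal in $\widehat{\phi_i}$ with $s_i=s_{i-1}$ or $s_i\prec s_{i-1}$, obtains $s_n\in\widehat{\phi_0}$ from $\phi_n\nc_{\mathcal{M}}\phi_0$, and concludes $s_n=s_0$ in one step---but your nested-contradiction organization unrolls to exactly this chain, so the difference is purely cosmetic.
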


In Section \ref{sec:Example}, we give an example to show that (Loop) is not valid on all the concpetual cumulative models.  In the following parts, we show that the rule (Loop) in fact characterizes conceptual cumulative ordered models.

\subsection{Characterization of Loop-cumulative Consequence Relations}

Given any loop-cumulative relation $\nc$, by Theorem \ref{repthm}, there is a conceptual cumulative model $\mathcal{M}=(S,l,\prec)$ such that $\nc\, = \nc_\mathcal{M}$. In particular, this conceptual cumulative model is the model defined in Section \ref{ssuc:conceptual cumulative models}. Let $\prec^+$ be the transitive closure of $\prec$. We can show that $\prec^+$ is irreflexive and, hence it turns out to be a strict partial order analogous to \cite[Lemma 16]{kraus1990nonmonotonic}. Then, $\mathcal{M}=(S,l,\prec^+)$ is a conceptual cumulative ordered model. 

We get the following proposition from Lemma \ref{minconcept} and the fact that $\prec^+$ is a strict partial order.

\begin{proposition}
  In $\mathcal{M}$, for any $\phi$, the state $\phi/_\sim$ is a minimum of $\widehat{\phi}$. Therefore, $\mathcal{M}$ is a strong cumulative ordered model.
\end{proposition}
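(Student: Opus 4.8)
The plan is to observe that replacing $\prec$ by its transitive closure $\prec^+$ changes neither the state space $S$ nor the labelling $l$, hence leaves every set $\widehat{\phi}=\{s\in S\mid s\models\phi\}$ exactly as it was in the canonical conceptual cumulative model of Section~\ref{ssuc:conceptual cumulative models}. So the first assertion of the proposition is just Lemma~\ref{minconcept} re-read with $\prec$ replaced by $\prec^+$.

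First I would invoke Lemma~\ref{minconcept}: for each $\phi$, the state $\phi/_\sim$ lies in $\widehat{\phi}$ and satisfies $\phi/_\sim\prec\psi/_\sim$ for every $\psi/_\sim\in\widehat{\phi}$ with $\psi/_\sim\neq\phi/_\sim$. Since $\prec\subseteq\prec^+$ by definition of transitive closure, the same inequalities hold with $\prec^+$ in place of $\prec$, so $\phi/_\sim$ is still a minimum of $\widehat{\phi}$ in $\mathcal{M}=(S,l,\prec^+)$. This gives the clause ``for any $\phi$, the state $\phi/_\sim$ is a minimum of $\widehat{\phi}$''.

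Then I would assemble the conclusion. By the paragraph preceding the proposition (the analogue of \cite[Lemma~16]{kraus1990nonmonotonic}), $\prec^+$ is a strict partial order, hence irreflexive and asymmetric. The two conditions defining a \emph{strong} conceptual cumulative model are precisely asymmetry of the preference relation and the existence of a minimum of $\widehat{\phi}$ for every $\phi\in\mathcal{L}$, both of which we now have; being in addition a strict partial order makes $\mathcal{M}$ a conceptual cumulative ordered model, so $\mathcal{M}$ is a strong conceptual cumulative ordered model. The one micro-step that needs a line is that a minimum is in fact minimal (so smoothness, i.e.\ the ``conceptual cumulative model'' requirement, genuinely holds): if $s\prec^+\phi/_\sim$ for some $s\in\widehat{\phi}$, then $s\neq\phi/_\sim$ by irreflexivity, whence $\phi/_\sim\prec^+ s$ by the minimum property, contradicting asymmetry; so no such $s$ exists and $\phi/_\sim$ is minimal (indeed the unique minimal element of $\widehat{\phi}$, which also keeps $\nc_\mathcal{M}$ equal to $\nc$).

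I do not expect a genuine obstacle here: the argument is bookkeeping layered on Lemma~\ref{minconcept} and the already-established fact that $\prec^+$ is a strict partial order. The only points demanding care are keeping the definitions of ``minimal'' and ``minimum'' from Section~\ref{ssec:classical cumulative reasoning} straight, and making explicit that $\widehat{\phi}$ does not depend on the preference relation so that Lemma~\ref{minconcept} transfers verbatim.
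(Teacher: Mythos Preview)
Your proposal is correct and follows essentially the same approach as the paper, which simply states that the proposition follows from Lemma~\ref{minconcept} and the fact that $\prec^+$ is a strict partial order. Your write-up is a faithful and careful unpacking of exactly that one-line justification, including the observation that $\widehat{\phi}$ is unchanged and that $\prec\subseteq\prec^+$ transfers the minimum property.
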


The following proposition derives from the above proposition and Lemma \ref{lem: normal cummulative}.

\begin{proposition}
    $\phi\nc\psi$ if and only if $\phi\nc_\mathcal{M}\psi$. 
\end{proposition}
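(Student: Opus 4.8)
The plan is to transfer the propositional argument from \cite[Proposition~13]{kraus1990nonmonotonic} (or the analogous step), exploiting the two facts already established: that $\mathcal{M}=(S,l,\prec^+)$ is a \emph{strong} conceptual cumulative ordered model in which, for every $\phi$, the state $\phi/_\sim$ is the minimum of $\widehat{\phi}$ (the preceding proposition), and that normality is characterized by Lemma~\ref{lem: normal cummulative}. The key observation is that changing $\prec$ to its transitive closure $\prec^+$ does not change which states are \emph{minimal} in any $\widehat{\phi}$: since $\phi/_\sim$ is already a $\prec$-minimum of $\widehat{\phi}$, it is still a $\prec^+$-minimum (if $s\prec^+ \phi/_\sim$ then unravelling the $\prec$-chain and using that $\phi/_\sim$ is the minimum forces $s=\phi/_\sim$), and no other state of $\widehat{\phi}$ can be minimal with respect to either relation. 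Hence $\widehat{\phi}$ has the same unique minimal state, namely $\phi/_\sim$, in $(S,l,\prec)$ and in $(S,l,\prec^+)$, so $\nc_{(S,l,\prec)}$ and $\nc_{(S,l,\prec^+)}$ coincide, and both equal $\nc$ by Lemma~\ref{lem:characterizeofconcequencerelation}.

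Concretely, I would argue as follows. First I would note that $\phi\nc_\mathcal{M}\psi$ holds iff $s\in\widehat{\psi}$ for every $s$ minimal in $\widehat{\phi}$; by the preceding proposition $\widehat{\phi}$ has a minimum $\phi/_\sim$, and a set with a minimum has that minimum as its unique minimal element, so this reduces to $\phi/_\sim\in\widehat{\psi}$. Next, $\phi/_\sim\in\widehat{\psi}$ means $\phi/_\sim\models\psi$, i.e.\ $\mathbb{M}_a\Vdash\psi$ for every $\mathbb{M}_a\in l(\phi/_\sim)$, i.e.\ every pointed polarity-based model normal for $\phi$ satisfies $\psi$. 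By Lemma~\ref{lem: normal cummulative}, this is equivalent to $\phi\nc\psi$. Chaining these equivalences gives $\phi\nc\psi \iff \phi\nc_\mathcal{M}\psi$.

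I do not expect any genuine obstacle here; the content has essentially been pre-digested by the two propositions immediately above, which is why the paper flags it as derived "from the above proposition and Lemma~\ref{lem: normal cummulative}." The only point requiring the slightest care is the implicit claim that passing from $\prec$ to $\prec^+$ preserves the relevant minimal states — but this is already subsumed in the statement of the preceding proposition (that $\phi/_\sim$ is a minimum of $\widehat{\phi}$ in the \emph{ordered} model $\mathcal{M}=(S,l,\prec^+)$), so once that is granted the proof is a three-line chain of "iff"s exactly as in the cumulative (non-loop) case, Lemma~\ref{lem:characterizeofconcequencerelation}. I would therefore write the proof simply as: by the preceding proposition $\phi/_\sim$ is the unique minimal state of $\widehat{\phi}$ in $\mathcal{M}$, so $\phi\nc_\mathcal{M}\psi$ iff $\phi/_\sim\models\psi$ iff every model normal for $\phi$ satisfies $\psi$ iff (by Lemma~\ref{lem: normal cummulative}) $\phi\nc\psi$.
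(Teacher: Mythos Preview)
Your proposal is correct and matches the paper's intended argument: the proposition is flagged as following from the preceding proposition (that $\phi/_\sim$ is the minimum of $\widehat{\phi}$ in $\mathcal{M}=(S,l,\prec^+)$) together with Lemma~\ref{lem: normal cummulative}, and your chain of equivalences $\phi\nc_\mathcal{M}\psi \iff \phi/_\sim\models\psi \iff$ every model normal for $\phi$ satisfies $\psi$ $\iff \phi\nc\psi$ is exactly that derivation. Your side remark about $\prec$ versus $\prec^+$ is accurate but, as you note, already absorbed into the preceding proposition.
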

Thus, we get the following representation theorem for loop-cumulative relations.

\begin{theorem}
    A conceptual consequence relation is loop-cumulative if and only if it is defined by some conceptual cumulative ordered model.
\end{theorem}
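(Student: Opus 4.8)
The plan is to prove both directions of the representation theorem for loop-cumulative relations, mirroring the structure already established for cumulative relations in Theorem \ref{repthm} and the proof strategy of \cite[Section 3.6]{kraus1990nonmonotonic}.

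\textbf{Soundness direction.} Suppose $\nc\, = \nc_\mathcal{M}$ for some conceptual cumulative ordered model $\mathcal{M}$. By Theorem \ref{soundness}, $\nc_\mathcal{M}$ is already closed under all rules and axioms of $\mathbf{CC}$, so it remains only to verify that $\nc_\mathcal{M}$ is closed under (Loop). This is exactly the content of the Proposition stating that (Loop) is valid on all conceptual cumulative ordered models (whose proof uses the smoothness of each $\widehat{\phi_i}$ to descend along $\prec$ and the transitivity of $\prec$ to close the cycle). Hence $\nc_\mathcal{M}$ is loop-cumulative.

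\textbf{Completeness direction.} Conversely, let $\nc$ be loop-cumulative. Since $\nc$ in particular satisfies all rules and axioms of $\mathbf{CC}$, Theorem \ref{repthm} (more precisely, the strong version noted after it) yields the canonical conceptual cumulative model $\mathcal{M}=(S,l,\prec)$ with $S = \mathcal{L}/_\sim$, $l(\phi/_\sim)$ the set of normal pointed polarity-based models for $\phi$, and $\phi/_\sim \prec \psi/_\sim$ iff $\phi/_\sim \leq \psi/_\sim$ and $\phi/_\sim \neq \psi/_\sim$, satisfying $\nc\, = \nc_\mathcal{M}$. This $\prec$ need not be transitive, so I pass to its transitive closure $\prec^+$ and consider $\mathcal{M}^+ = (S, l, \prec^+)$. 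The main step is to show $\prec^+$ is irreflexive: a $\prec^+$-cycle $\phi_0/_\sim \prec \phi_1/_\sim \prec \cdots \prec \phi_n/_\sim \prec \phi_0/_\sim$ unwinds, via Definition \ref{def:order cummulative}, into a chain of $\nc$-assertions $\phi_{i+1}\nc\chi_i$ with $\chi_i \sim \phi_i$; using (LLE)/(Equivalence) to replace each $\chi_i$ by $\phi_i$, one obtains $\phi_0\nc\phi_1,\ \phi_1\nc\phi_2,\ \dots,\ \phi_n\nc\phi_0$ (up to reindexing), and then (Loop) together with (Equivalence) forces all the $\phi_i$ to be $\sim$-equivalent, contradicting that the cycle passes through distinct states. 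Hence $\prec^+$ is a strict partial order and $\mathcal{M}^+$ is a conceptual cumulative ordered model. I then need $\nc_{\mathcal{M}^+}\, = \nc_\mathcal{M}\, = \nc$: by Lemma \ref{minconcept} the state $\phi/_\sim$ is the $\prec$-minimum of $\widehat{\phi}$, and since $\prec \subseteq \prec^+$ it remains a $\prec^+$-minimum; because $\widehat{\phi}$ has a minimum, its set of minimal states is $\{\phi/_\sim\}$ under both $\prec$ and $\prec^+$, so the induced consequence relations coincide, and then Lemma \ref{lem: normal cummulative} gives $\phi\nc_{\mathcal{M}^+}\psi$ iff $\phi\nc\psi$.

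\textbf{Main obstacle.} The delicate point is the irreflexivity of $\prec^+$: one must carefully track how a transitive-closure cycle decomposes into individual $\prec$-steps, translate each step back through Definition \ref{def:order cummulative} into a plausible-consequence assertion involving some representative $\chi_i\sim\phi_i$, and then apply (Loop) (in its derived form $\frac{\phi_0\nc\phi_1\ \cdots\ \phi_n\nc\phi_0}{\phi_i\nc\phi_j}$) to collapse the cycle. Once irreflexivity is in hand, everything else is a routine transfer of the minimum-state argument from the cumulative case, since enlarging $\prec$ to $\prec^+$ cannot create new minimal states when a minimum already exists.
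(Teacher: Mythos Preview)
Your proposal is correct and follows essentially the same route as the paper: soundness via Theorem \ref{soundness} together with the Proposition that (Loop) is valid on conceptual cumulative ordered models, and completeness by taking the canonical strong cumulative model of Section \ref{ssec:Characterization cumulative consequence}, replacing $\prec$ by its transitive closure $\prec^+$, using (Loop) to show $\prec^+$ is irreflexive, and then invoking Lemma \ref{minconcept} and Lemma \ref{lem: normal cummulative} to conclude that the resulting ordered model still defines $\nc$. One small expositional remark: when you unwind $\phi_i/_\sim \prec \phi_{i+1}/_\sim$ via Definition \ref{def:order cummulative} you obtain $\phi_{i+1}\nc\chi_i$ with $\chi_i\sim\phi_i$, so the substitution you need is of $\phi_{i+1}$ by $\chi_{i+1}$ on the \emph{left} (which is exactly what (Equivalence)/Lemma \ref{lem: cummulative equivalence} licenses), yielding a loop among the $\chi_i$; your ``up to reindexing'' caveat covers this, but it is worth making the direction of the substitution explicit.
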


\subsection{Conceptual Preferential Models}
In this section, we introduce conceptual preferential models and conceptual preferential ordered  models and  show that  they are sound and complete w.r.t.~$\mathbf{CC}$, and~$\mathbf{CCL}$, respectively. 

\begin{definition}\label{def:conceptual  preferential  models}
A conceptual cumulative model $\mathcal{M}=(S,l,\prec)$ is said to be a {\em conceptual preferential model} if the label $l$ assigns a single pointed polarity-based model to each state.   
\end{definition}
In lattice-based logic,  it is possible that $\mathbb{M}, a \Vdash \phi \vee \psi $,  $\mathbb{M}, a \not \Vdash \phi$, and $\mathbb{M}, a \not \Vdash \psi$. Hence, unlike classical preferential models, $\widehat{\phi \vee \psi} =\widehat{\phi} \cup \widehat{\psi}$ is not valid in conceptual preferential models. Thus, the rule (Or) is not valid on conceptual preferential models. In fact, we will show that system $\mathbf{CC}$ is complete w.r.t.~the class of conceptual preferential models. 

\begin{definition}\label{def:supernormal}
 A pointed polarity-based model $\mathbb{M}_a$ is said to be supernormal for a concept $\phi$ if and only if  for all $\psi \in \mathcal{L}$,  $\phi\nc\psi \Leftrightarrow \mathbb{M}_a \Vdash\psi$. 
\end{definition}

\begin{lemma}\label{lem:supernormal}
For any $\phi \in \mathcal{L}$, there exists a pointed polarity-based model $\mathbb{M}_a$ which is supernormal for $\phi$.
\end{lemma}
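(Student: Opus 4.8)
The plan is to fix $\phi \in \mathcal{L}$ and build a single pointed polarity-based model $\mathbb{M}_a = (\mathbb{P}, V, a)$ whose pointed object $a$ satisfies exactly those $\psi$ with $\phi \nc \psi$. The set $\Delta = \{\psi \mid \phi \not\nc \psi\}$ is precisely the set of formulas we want $a$ \emph{not} to satisfy, while $\Gamma = \{\psi \mid \phi \nc \psi\}$ is the set we want $a$ to satisfy. By Lemma~\ref{lem: normal cummulative}, for each $\psi \in \Delta$ there is a pointed polarity-based model $\mathbb{M}^\psi_{a_\psi}$ normal for $\phi$ — hence satisfying every $\phi'' \in \Gamma$ (since $\phi \nc \phi''$ and normality) — with $\mathbb{M}^\psi_{a_\psi} \not\Vdash \psi$. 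So for every $\psi \in \Delta$ we have a model validating all of $\Gamma$ at a point but refuting $\psi$ at that point.

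The key step is then to invoke Proposition~\ref{prop:joint dissatiscfaction} with this $\Gamma$ and $\Delta$: it yields a single polarity-based model $\mathbb{M} = (\mathbb{P}, V)$ with $\mathbb{P} = (A,X,I)$ and a single object $a \in A$ such that $\mathbb{M}, a \Vdash \phi''$ for every $\phi'' \in \Gamma$ and $\mathbb{M}, a \not\Vdash \psi$ for every $\psi \in \Delta$ simultaneously. This is exactly the place where the conceptual setting behaves better than the classical one (as the remark after Proposition~\ref{prop:joint dissatiscfaction} stresses): the join of concepts is given by an intersection on intensions, so there is no analogue of the $p \vee \neg p$ obstruction, and a single point can fail all the ``bad'' formulas at once. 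Setting $\mathbb{M}_a := (\mathbb{P}, V, a)$, we get $\phi \nc \psi \Rightarrow \mathbb{M}_a \Vdash \psi$ directly (that is $\Gamma$), and $\mathbb{M}_a \Vdash \psi \Rightarrow \phi \nc \psi$ by contraposition (if $\phi \not\nc \psi$ then $\psi \in \Delta$ so $\mathbb{M}_a \not\Vdash \psi$). Hence $\mathbb{M}_a$ is supernormal for $\phi$.

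One detail to check before applying Proposition~\ref{prop:joint dissatiscfaction} is that its hypothesis is met verbatim: for each $\psi \in \Delta$ the witnessing model must satisfy \emph{all} of $\Gamma$ at the chosen point and refute $\psi$ there, which is precisely what normality for $\phi$ (Definition~\ref{def:normal}) delivers via Lemma~\ref{lem: normal cummulative}. I also need the degenerate case $\Delta = \emptyset$ (i.e.\ $\phi \nc \psi$ for all $\psi$): then any pointed model normal for $\phi$ — which exists since $\nc$ is reflexive, so $\phi \not\nc \bot$ would give one, but if even $\phi \nc \bot$ one takes any model and it is still supernormal vacuously on the $\Leftarrow$ direction — works; this edge case should be handled with a line. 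The main (and only genuine) obstacle is recognizing that Proposition~\ref{prop:joint dissatiscfaction} is exactly the tool that glues the per-$\psi$ witnesses into one model at one point; once that is seen, the argument is short and essentially bookkeeping.
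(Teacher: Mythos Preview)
Your proposal is correct and follows essentially the same route as the paper's own proof: define $\Gamma=\{\psi\mid \phi\nc\psi\}$ and $\Delta=\{\psi\mid \phi\not\nc\psi\}$, use Lemma~\ref{lem: normal cummulative} to get, for each $\psi\in\Delta$, a pointed model satisfying all of $\Gamma$ and refuting $\psi$, and then apply Proposition~\ref{prop:joint dissatiscfaction} to merge these into a single witness. Your extra paragraph on the edge case $\Delta=\emptyset$ is not needed (and is slightly garbled): the hypothesis of Proposition~\ref{prop:joint dissatiscfaction} is vacuous in that case, and its conclusion still yields a pointed model satisfying every formula, which is possible in the polarity setting (e.g.\ take a one-object polarity with $I=A\times X$).
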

\begin{proof}
Let $\Gamma =\{\psi \in \mathcal{L} \mid \phi \nc \psi\}$, and  $\Delta =\{\psi \in \mathcal{L} \mid \phi \not\nc \psi\}$. By Lemma \ref{lem: normal cummulative}, for any $\chi \in  \Delta$, there exists a pointed polarity-based model $\mathbb{N}_b$, such that $\mathbb{N}_b \Vdash \psi$ for all $\psi \in \Gamma$, and $\mathbb{N}_b \not \Vdash \chi$. Thus, by Proposition \ref{prop:joint dissatiscfaction}, there exists a  pointed polarity-based model $\mathbb{M}_a$ such that $\mathbb{M}_a \Vdash \psi$ for all $\psi \in \Gamma$, and  $\mathbb{M}_a \not \Vdash \chi$ for all $\chi \in \Delta$. That is,  $\mathbb{M}_a$ is supernormal for $\phi$. 
\end{proof}
Note that this lemma does not hold in the setting of \cite{kraus1990nonmonotonic} as Proposition  \ref{prop:joint dissatiscfaction} does not hold in propositional setting.

For a given defeasible consequence relation satisfying all the rules and axioms of $\mathbf{CC}$, we define a conceptual preferential model similar to the conceptual cumulative model defined in Section \ref{ssec:Characterization  cumulative consequence} with only the following difference: For any state $\phi/_\sim$, $l(\phi/_\sim)=\mathbb{M}_a$, where $\mathbb{M}_a$ is supernormal for $\phi$. It is easy to check that the lemmas \ref{minconcept} and \ref{lem:characterizeofconcequencerelation} hold for preferential model as defined above. Therefore, we get the following result.

\begin{theorem}\label{thm:completeness preferential models}
A conceptual  consequence relation is a cumulative consequence relation iff it is defined by some conceptual preferential model.
\end{theorem}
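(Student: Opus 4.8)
The plan is to establish the two directions of the biconditional separately, leveraging the work already done for conceptual cumulative models. For the \emph{soundness} direction (every consequence relation defined by a conceptual preferential model is a conceptual cumulative consequence relation), I would observe that a conceptual preferential model is, by Definition~\ref{def:conceptual preferential models}, a special case of a conceptual cumulative model — one in which every label set $l(s)$ is a singleton. Hence Theorem~\ref{soundness} applies verbatim: $\nc_\mathcal{M}$ is closed under all axioms and rules of $\mathbf{CC}$, so it is a conceptual cumulative consequence relation. Nothing new is needed here.

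For the \emph{completeness} direction (every conceptual cumulative consequence relation $\nc$ is defined by some conceptual preferential model), I would mimic the construction of Section~\ref{ssec:Characterization cumulative consequence}, replacing the label map by one using supernormal models. Concretely, set $S = \mathcal{L}/_\sim$, let $l(\phi/_\sim) = \{\mathbb{M}_a\}$ where $\mathbb{M}_a$ is supernormal for $\phi$ (this exists by Lemma~\ref{lem:supernormal}), and define $\phi/_\sim \prec \psi/_\sim$ iff $\phi/_\sim \le \psi/_\sim$ and $\phi/_\sim \ne \psi/_\sim$, exactly as before. One must first check that $l$ is well-defined: if $\phi \sim \psi$ then by Lemma~\ref{lem: cummulative equivalence} the sets $\{\chi \mid \phi \nc \chi\}$ and $\{\chi \mid \psi \nc \chi\}$ coincide, so a model supernormal for $\phi$ is supernormal for $\psi$, and the choice of representative is immaterial (modulo the usual fixed-choice caveat). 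Since a supernormal model for $\phi$ is in particular normal for $\phi$, the same proofs as for Lemma~\ref{minconcept} and Lemma~\ref{lem:characterizeofconcequencerelation} go through unchanged: $\phi/_\sim$ is the minimum (hence the unique minimal element) of $\widehat{\phi}$, and therefore $\phi \nc \psi$ iff $\phi \nc_\mathcal{M} \psi$. Finally I would verify that $\mathcal{M}$ is genuinely a conceptual cumulative model, i.e.\ that each $\widehat{\phi}$ is smooth — which is immediate since $\widehat{\phi}$ has $\phi/_\sim$ as a minimum, so every state in it is either minimal or sits above the minimal state $\phi/_\sim$.

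The one genuinely new ingredient, and the step I expect to be the crux, is Lemma~\ref{lem:supernormal} — the existence of a \emph{single} pointed polarity-based model that simultaneously satisfies every $\psi$ with $\phi \nc \psi$ and falsifies every $\chi$ with $\phi \not\nc \chi$. In the propositional KLM setting this fails (one cannot falsify both $p$ and $\neg p$ at one world), which is exactly why preferential models there are strictly stronger than cumulative ones and validate (Or). Here it succeeds because of Proposition~\ref{prop:joint dissatiscfaction}: the lattice connectives $\vee$ and $\wedge$ are interpreted via intersections of Galois-stable sets, so disjointly-witnessed non-satisfactions can be amalgamated into one model. Thus the heart of the argument is simply assembling Lemma~\ref{lem: normal cummulative} (to get, for each $\chi \in \Delta$, a normal model for $\phi$ falsifying $\chi$) with Proposition~\ref{prop:joint dissatiscfaction} (to amalgamate these into a single model falsifying all of $\Delta$ while still satisfying all of $\Gamma$), which is precisely what Lemma~\ref{lem:supernormal} records. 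Once that lemma is in hand, the completeness proof is a routine transcription of the cumulative case, and the theorem follows.
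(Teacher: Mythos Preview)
Your proposal is correct and follows essentially the same approach as the paper: soundness via the observation that preferential models are special cumulative models (Theorem~\ref{soundness}), and completeness by reusing the canonical construction of Section~\ref{ssec:Characterization  cumulative consequence} with the single modification that $l(\phi/_\sim)$ is taken to be a supernormal model for $\phi$ (Lemma~\ref{lem:supernormal}), after which Lemmas~\ref{minconcept} and~\ref{lem:characterizeofconcequencerelation} carry over verbatim. Your write-up is in fact more explicit than the paper's, spelling out well-definedness of $l$, smoothness, and the role of Proposition~\ref{prop:joint dissatiscfaction} in making Lemma~\ref{lem:supernormal} go through.
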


In the propositional setting, the rule (Loop) is valid on all preferential models. However, this is not the case in conceptual setting (cf.~Section \ref{sec:Example}). A conceptual preferential model $\mathcal{M}=(S,l,\prec)$ is said to be a {\em conceptual preferential ordered model} iff $\prec$ is a strict partial order. Similar to the cumulative models we can show the following theorem.

\begin{theorem}
A conceptual  consequence relation is loop-cumulative iff it is defined by some conceptual  preferential ordered model.
\end{theorem}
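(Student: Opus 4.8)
The final theorem states that a conceptual consequence relation is loop-cumulative if and only if it is defined by some conceptual preferential ordered model. The plan is to mirror the development already carried out for conceptual cumulative ordered models in the previous subsection, combining it with the supernormality construction used for conceptual preferential models. The soundness direction is essentially already in hand: a conceptual preferential ordered model is in particular a conceptual cumulative ordered model (the labelling condition is a restriction, not a weakening of the frame conditions), so the consequence relation it defines satisfies all rules of $\mathbf{CC}$ by Theorem~\ref{soundness}, and it satisfies (Loop) by the Proposition asserting that (Loop) is valid on all conceptual cumulative ordered models. Hence every relation defined by a conceptual preferential ordered model is loop-cumulative.

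For the completeness direction, I would start from a loop-cumulative relation $\nc$ and reuse the preferential model $\mathcal{M}=(S,l,\prec)$ constructed in the proof of Theorem~\ref{thm:completeness preferential models}: $S=\mathcal{L}/_\sim$, $l(\phi/_\sim)=\mathbb{M}_a$ for some $\mathbb{M}_a$ supernormal for $\phi$ (which exists by Lemma~\ref{lem:supernormal}), and $\phi/_\sim \prec \psi/_\sim$ iff $\phi/_\sim \leq \psi/_\sim$ and $\phi/_\sim \neq \psi/_\sim$. As in the loop-cumulative case for ordered models, I would then pass to the transitive closure $\prec^+$ and argue that $\prec^+$ is irreflexive — this is where the (Loop) rule is used, exactly as in \cite[Lemma 16]{kraus1990nonmonotonic} and the corresponding step for conceptual cumulative ordered models — so that $(S,l,\prec^+)$ is a conceptual preferential ordered model. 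Because $\prec^+$ is a strict partial order and $\phi/_\sim$ is still the minimum (hence the unique minimal state) of $\widehat{\phi}$ (Lemma~\ref{minconcept}, which holds for this preferential model), the argument of Lemma~\ref{lem:characterizeofconcequencerelation} applies verbatim: $\phi\nc_\mathcal{M}\psi$ iff every minimal state of $\widehat{\phi}$ satisfies $\psi$ iff $\phi/_\sim \models \psi$ iff $\mathbb{M}_a \Vdash \psi$ (where $\mathbb{M}_a = l(\phi/_\sim)$ is supernormal for $\phi$) iff $\phi\nc\psi$. This yields $\nc = \nc_\mathcal{M}$.

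The one point requiring a little care — and the step I expect to be the main obstacle — is verifying that enlarging $\prec$ to its transitive closure does not destroy the smoothness condition or the minimum property of the states $\phi/_\sim$, and in particular that $\prec^+$ remains irreflexive. Irreflexivity of $\prec^+$ is precisely what fails without (Loop); the proof that it holds for loop-cumulative relations should transfer from the conceptual cumulative ordered case essentially without change, since that argument only used Lemma~\ref{minconcept}, Lemma~\ref{lem: normal cummulative}, and the derived ``cyclic'' rule $\phi_i \nc \phi_j$ obtained from (Loop) and (Equivalence), all of which are available here. Once irreflexivity is secured, asymmetry and transitivity of $\prec^+$ are automatic, smoothness follows because each $\widehat{\phi}$ still has the minimum $\phi/_\sim$ (adding edges to $\prec$ cannot remove a minimum), and the labelling still assigns a single pointed polarity-based model to each state. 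I would therefore present the proof as: (i) soundness via the ordered-model result, (ii) completeness by reusing the supernormal preferential model, taking the transitive closure of $\prec$, and citing the loop-cumulative irreflexivity argument, (iii) concluding $\nc = \nc_\mathcal{M}$ by the already-established lemmas on minima and supernormality.
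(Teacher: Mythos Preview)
Your proposal is correct and follows exactly the approach the paper intends: the paper's own proof is the single line ``Similar to the cumulative models we can show the following theorem,'' and your plan unpacks this by combining the supernormal-labelling preferential construction of Theorem~\ref{thm:completeness preferential models} with the transitive-closure step from the loop-cumulative characterization, using (Loop) to secure irreflexivity of $\prec^+$. The only phrasing to tighten is the parenthetical ``adding edges to $\prec$ cannot remove a minimum''; the correct justification (which you also give implicitly) is that if $\phi/_\sim$ is the $\prec$-minimum of $\widehat{\phi}$ and $\prec^+$ is irreflexive and transitive, then any $\psi/_\sim \prec^+ \phi/_\sim$ with $\psi/_\sim \in \widehat{\phi}$ would yield $\phi/_\sim \prec^+ \phi/_\sim$, a contradiction.
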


\section{Example}\label{sec:Example}
In this section, we give an example to demonstrate  reasoning on conceptual preferential ordered models and conceptual preferential  models.

\vspace{-0.5 cm}
{\small
\begin{table}[]
    \centering
    \begin{tabular}{|c|c|c|c|c|c|}
        \hline
        \textbf{Animal} & &\textbf{Feature} & &\textbf{Concept}
        & \\
        \hline
        $a_1$ & Platypus & $x_1$ & feeds by mammary glands & $C_1$ & mammals\\

        $a_2$ & Tiger & $x_2$ & gives birth to babies & $C_2$ & viviparous animals\\

        $a_3$ & Sparrow & $x_3$ & lays eggs & $C_3$ & oviparous animals\\

        $a_4$ & Scorpion & $x_4$ & is small & $C_4$ & small animals\\

        && $x_5$ & has warm blood & $C_5$ & warm-blooded animals \\
        \hline
    \end{tabular}
    
    \caption{Objects, features and concepts in the polarity-based model}
    \label{tab:objects and features}
    \vspace{-1cm}
\end{table}
}
Let $\mathbb{P}=(A,X,I)$ be a formal context,
where $A =\{a_1, a_2, a_3,a_4\}$ , $X =\{x_1,x_2,x_3,x_4,x_5\}$  are as in Table \ref{tab:objects and features}, and 
$I=\{(a_1,x_1), (a_1,x_3), (a_1,x_5), (a_2,x_1),\\ (a_2,x_2), (a_2,x_5), (a_3, x_3), (a_3,x_4), (a_3,x_5), (a_4, x_2),  (a_4,x_4)\}$. Note that $a I x$ iff animal $a$ has feature $x$. Let $V$ be a valuation which assigns $V(C_1)=(\{a_1,a_2\}, \{x_1,x_5\})$, $V(C_2)=(\{a_2,a_4\}, \{x_2\})$, and  $V(C_3)=(\{a_1, a_3\}, \{x_3,x_5\})$, $V(C_4)=(\{a_3,a_4\},\{x_4\})$, $V(C_5)=(\{a_1,a_2,a_3\},\{x_5\})$, where the concepts $C_1$, $C_2$, $C_3$, $C_4$, and $C_5$ are as in Table \ref{tab:objects and features}. Let $\mathcal{U}$ be the set of pointed polarity-based models $\mathbb{M}_i= (\mathbb{P},V,a_i)$, where $\mathbb{P}$ and $V$ are as defined above and $a_i\in A$. Let $S=\{s_1,s_2,s_3, s_4\}$
and $l(s_i) = \mathbb{M}_i$. Let $\mathcal{M} =(S, l \prec\}$, where $\prec=\{(s_2, s_1), (s_3,s_1), (s_4,s_1)\}$. Intuitively, $\prec$ says that  the Platypus is less typical or common compared to Tiger, Sparrow, and Scorpion, and typicality of other three  is incomparable. 

Note that the model $\mathcal{M}$ defined above is a conceptual preferential ordered model. Since $s_2 \prec s_1$,  $\widehat{C_1} =\{s_1, s_2\}$, and $\widehat{C_2}=\{s_2,s_4\} $, $C_1 \nc_{\mathcal{M}} C_2$, i.e.~typical mammals are viviparous. However, we have $C_1 \wedge C_3 \not \nc_{\mathcal{M}} C_2$. Indeed, typical oviparous mammals are not viviparous. Thus, $\nc_{\mathcal{M}}$ is nonmonotonic. Moreover, note that we have both $C_1 \nc_{\mathcal{M}} C_2$, and $C_2 \nc_{\mathcal{M}} C_2$, but $C_1 \vee C_2=\top   \not \nc_{\mathcal{M}} C_2$ as $\widehat{\top} =\{s_1, s_2,s_3,s_4\}$ has a minimal element $s_3$ which is not in $C_2$. Thus, the rule (Or) is not valid on  $\mathcal{M}$. This shows that unlike the classical case, the rule (Or) need not be valid on conceptual preferential ordered models.

Consider a slight variation of the above scenario, where $S$ and $l$ are the same as in $\mathcal{M}$. However, we have two reasoners $A$ and $B$ with preference orders $\prec_A$ and $\prec_B$. Suppose $\prec_A=\prec \cup \{(s_2, s_4),(s_4, s_3), (s_2,s_3)\}$ (i.e.~$A$ believes Tiger is more typical/common than Scorpion which is more typical/common than Sparrow) and $\prec_B=\prec \cup \{(s_3, s_2)\}$ (i.e.~$B$ believes Sparrow is more typical/common than Tiger). Suppose, $A$ and $B$ want to define a preference relation which both of them can agree on. They come up with the following method to define such preference relation $\prec\,$: for any $s_1$, $s_2$, $s_1 \prec s_2$ iff either (a) $s_1 \prec_B s_2$ or (b) $s_1 \prec_A s_2$ and $s_2 \not \prec_B s_1$, (this can be understood as saying all the preferences of $B$ have to be respected in $\prec$).

In this case, we end up with $\prec=\{(s_4, s_3), (s_2, s_4),(s_3,s_2)\}$ which is non-transitive\footnote{Non-transitive preference orders are common in different fields like psychology, economics, etc.}. Thus, the model $\mathcal{M}' =\{S,l, \prec\}$ is a conceptual preferential model which is not ordered. Note that $C_4\nc_{\mathcal{M}'} C_2$, $C_2 \nc_{\mathcal{M}'} C_5$, and $C_5 \nc_{\mathcal{M}'} C_4$, but $C_4 \not\nc_{\mathcal{M}'} C_5$. Thus, (Loop) is not valid on $\mathcal{M}'$ even though it is a conceptual preferential model.

\section{Conclusion and Future Works}\label{sec:Conclusion and future works}
In this paper, we have taken first steps toward developing non-monotonic reasoning on concepts. We define  generalizations of cumulative reasoning $\mathbf{C}$ and cumulative reasoning  with loop $\mathbf{CL}$ to conceptual setting. We also generalize cumulative models,   cumulative ordered models, and  preferential models  to conceptual setting and show soundness and completeness results for these models.

In the future, we plan to study the defeasible consequence relation defined in terms of typical features, as well as its interaction with the defeasible consequence relation explored in this paper, which is defined in terms of typical objects. Additionally, we aim to generalize defeasible reasoning systems incorporating Rational Monotonicity and ranked preference models \cite{lehmann1992does} to the conceptual setting. Finally, given the close relationship between AGM belief revision and non-monotonic reasoning \cite{gardenfors1994nonmonotonic}, we intend to use these reasoning systems to develop models for belief revision in Formal Concept Analysis. We believe this approach could be particularly beneficial in various applications of FCA, where reasoning about concepts needs to be updated as new knowledge is acquired.

%
%
\bibliographystyle{splncs04}
\bibliography{reference}
\end{document}